\newcommand*\iftodonotes{\if@todonotes@disabled\expandafter\@secondoftwo\else\expandafter\@firstoftwo\fi}  %
\newcommand{\justification}[1]{\text{{\color{gray}(#1)}}}
\icmltitlerunning{Principled Gradient-based Markov Chain Monte Carlo for Text Generation}
\begin{document}

\twocolumn[
\icmltitle{Principled Gradient-based Markov Chain Monte Carlo for Text Generation}

\icmlsetsymbol{equal}{*}

\begin{icmlauthorlist}
\icmlauthor{Li Du}{jhu,ethz}
\icmlauthor{Afra Amini}{ethz}
\icmlauthor{Lucas Torroba Hennigen}{mit}
\icmlauthor{Xinyan Velocity Yu}{usc} \\
\icmlauthor{Jason Eisner}{jhu}
\icmlauthor{Holden Lee}{jhu}
\icmlauthor{Ryan Cotterell}{ethz}
\end{icmlauthorlist}

\icmlaffiliation{jhu}{Johns Hopkins University}
\icmlaffiliation{ethz}{ETH Z\"{u}rich}
\icmlaffiliation{mit}{MIT CSAIL}
\icmlaffiliation{usc}{University of Southern California}

\icmlcorrespondingauthor{Li Du}{leodu@cs.jhu.edu}

\icmlkeywords{Machine Learning, ICML}

\vskip 0.3in
]

\printAffiliationsAndNotice{}

\begin{abstract}

Recent papers have demonstrated the possibility of energy-based text generation by adapting gradient-based sampling algorithms, a paradigm of MCMC algorithms that promises fast convergence.
However, as we show in this paper, previous attempts on this approach to text generation all fail to sample correctly from the target language model distributions.
To address this limitation, we consider the problem of designing text samplers that are faithful, meaning that they have the target text distribution as its limiting distribution.
We propose several faithful gradient-based sampling algorithms to sample from the target energy-based text distribution correctly, and study their theoretical properties.
Through experiments on various forms of text generation, we demonstrate that faithful samplers are able to generate more fluent text while adhering to the control objectives better.

\end{abstract}

\section{Introduction}
\label{sec:intro}

Recent papers have demonstrated the possibility of performing controlled text generation from pretrained language models by formulating energy-based models over text and applying Markov Chain Monte Carlo (MCMC) algorithms~\citep{qin2022cold,kumar-2022,mireshghallah-etal-2022-mix,amini2023structured}.
Such an energy-based formulation offers a considerable degree of versatility, allowing a generic pretrained language model to be coupled with arbitrary energy functions that express desired traits for the output text.
As is common in energy-based models (EBMs), one can then use MCMC algorithms to draw samples from these complicated distributions where the normalization constant is intractable to compute \citep{lin2021limitations}.

However, the discrete nature of text-based EBMs and their underlying combinatorial state space make it challenging to sample from them in a reasonable amount of time~\citep{Deng2020Residual}.
To address this problem, existing approaches commonly exploit the fact that a language model, as well as the auxiliary energy functions, are all differentiable with respect to the embedding space.
This observation allows the MCMC procedure to make use of first-order gradient information, potentially accelerating convergence.

As one of the most successful gradient-based samplers, Hamiltonian Monte Carlo (HMC) and its variants \citep{duane1987,neal1993probabilistic,hoffman2014nuts} have been proven to be highly effective in sampling from high-dimensional, continuous distributions, making them the default choice of sampler in many probabilistic programming languages \cite{carpenter2017stan,bingham2018pyro,phan2019numpyro}. Adapting HMC into a discrete setting, \citet{amini2023structured} recently proposed a promising sampler for controlled text generation.
Alternatively, Langevin dynamics \cite{grenander1994langevin,welling2011langevin}, another gradient-based sampler, has been a more popular candidate to adapt into NLP models due to its simplicity.\footnote{In fact, it is well-known that Langevin dynamics can be seen as HMC where the Hamiltonian dynamics are simulated for a single step. 
See, e.g., \citet{neal1993probabilistic} or \citet{kennedy1990}.}
As a result, \citet{qin2022cold} and \citet{kumar-2022} proposed text samplers inspired by Langevin dynamics.\looseness=-1

Unfortunately, a closer look reveals that \textit{none} of these gradient-based Markov chains provably converge to their intended distributions in the limit, as we theoretically and empirically show in \cref{sec:faithful-text-samplers}.
This observation seemingly contradicts the empirical results reported in these papers and begs the question: What would happen if we sample from the target distribution correctly?

In this work, we tackle this question by proposing several tractable gradient-based samplers that are \emph{faithful} to the target energy-based text distribution, meaning that they have the correct limiting distribution.
We derive two novel samplers, based on Langevin dynamics and the Gibbs sampler, respectively, and then develop their adaptive and hybrid variants.
When applicable, we will also prove convergence and mixing properties of our proposed samplers.
It is not inherently true that faithful samplers should outperform unfaithful methods.
Indeed, prior methods may have been optimized for useful inductive biases that work to their advantage and boost performance \emph{despite} their not sampling from their intended distributions.
Thus, through experiments on various forms of text generation, we explore whether our proposed faithful samplers can generate more fluent text while adhering to the control target better.
Our experimental results suggest that faithful samplers do, in general, outperform unfaithful samplers.\looseness=-1

\section{Energy-based Models of Text}
\label{sec:ebm-text}

Pretrained language models \citep{radford2019language,raffel2020t5,brown2020gpt3} have demonstrated impressive abilities in generating fluent texts.
They do so by factorizing a string-valued distribution $\plm(\vw)$ ($\vw\in\alphabet^*$) over some vocabulary $\alphabet$ with local normalization \citep{du2023measure}
\begin{align}
    \label{eq:auto-regressive-lm}
    \plm(\vw)=\plm(\eos\mid\vw)\prod_{n=1}^{N} \plm(w_n\mid \vw_{<n})
\end{align}
and train the local conditionals $\plm(\cdot\mid\vw_{<n})$ on massive amount of text.
Since such texts often come from heterogeneous sources (e.g., newspapers, blog posts, etc.) and does not fit into any particular category or style, we say that such a distribution is \textit{unconstrained}.

On the other hand, it is often useful to perform \textit{controlled generation}---sampling a text that satisfies one or several soft constraints.
Such constraints might be lexical, semantic, grammatical, or arbitrary functions that evaluate some global property over the entire sequence.  
When we reduce the (log-)probabilities of each text to the degree it violates the constraints, we are left with the challenging problem of sampling from a (discrete) unnormalized probability distribution, which may be presented in the form of an energy-based model (EBM; \citealt{hinton2002,lecun2006ebm}):
\begin{align}
    \label{eq:ebm}
    \pi(\vw)=\frac{1}{Z}\exp(-U(\vw))
\end{align}
Here $U(\vw)$ is called the \textit{energy function}.\footnote{The notation $U(\vw)$ rather than the usual $E(\vx)$ is drawn from the HMC literature, which calls it the \textit{potential function}.} %
The flexibility of this framework lies in the fact that one can refine an existing model by coupling its energy function with arbitrary functions that express the desired attributes of the output text.
Concretely, we can set 
\begin{align}
    \label{eq:general-control-text-ebm}
    U(\vw)=\ulm(\vw) + \sum_{i=1}^I U_i(\vw)
\end{align}%
where $\ulm(\vw)\defeq-\log \plm(\vw)$ (from \cref{eq:auto-regressive-lm}) and each $U_i(\vw)$ measures the extent to which the sequence $\vw$ satisfies the $i$\textsuperscript{th} constraint. This energy function yields a distribution that is related to $\plm(\vw)$ but places more probability mass on the sequences that better satisfy the constraints. %

\section{Text Generation as MCMC}

\subsection{Sampling from EBMs}
\label{sec:sampling-from-ebms}

The flexible formulation in \cref{eq:ebm,eq:general-control-text-ebm} allows us to cast controlled text generation as the problem of sampling from an energy-based model. However, EBMs can be challenging to sample from.

Consider sampling a sequence of $N$ words $\vw=w_1 \cdots w_N\in\alphabet^N$ from the EBM defined by \cref{eq:ebm,eq:general-control-text-ebm}.
The normalization constant $Z$ from the EBM defined by \crefrange{eq:ebm}{eq:general-control-text-ebm} is then an intractable sum of $|\alphabet|^N$ terms.\footnote{It can be tractable in special cases such as linear-chain graphical models, but not in general.}
Similarly, the locally normalized conditional probabilities needed for left-to-right autoregressive sampling---which are effectively ratios of normalization constants---are also intractable \citep{lin2021limitations}. 

As in other situations where exact sampling is unavailable,  we may resort to Markov Chain Monte Carlo (MCMC), an approach to sampling approximately from unnormalized distributions \citep{metropolis1953equation}. 
In our situation, the combinatorially large underlying state space $\alphabet^N$ means that naive MCMC algorithms such as the Random Walk Metropolis (RWM) would have near-zero acceptance rate.
Gibbs sampling, another commonly used MCMC algorithm, requires one to be able to efficiently sample from the conditional $\pi(w_n\mid \vw_{\setminus n})$.\footnote{We use $\vw_{\setminus n}$ to denote the set of random variables of all indices \textit{except} $i$, i.e., $\vw_{\setminus n}=w_1\cdots w_{n-1} w_{n+1} \cdots w_N$.}
This is also impractical since evaluating $\pi(\cdot\mid\vw_{\setminus n})$ in a locally normalized LM would again involve a summation of $|\alphabet|$ terms.

\subsection{Gradient-based Sampling through Relaxation}

The challenges outlined in the previous section indicates that we need additional techniques to obtain a sampling procedure that yields quality samples in a reasonable amount of time.
Observing that $\ulm$ defined from a pretrained neural LM is differentiable, as well as possibly the constraint functions $U_i$, several prior works have taken inspiration from the success of gradient-based sampling in other domains \citep{neal2011hmc,hoffman2014nuts,carpenter2017stan,welling2011langevin,du2019implicit,song2020score-based} and attempted to leverage gradient information when sampling from text-based EBMs \cite{qin2022cold,kumar-2022,amini2023structured}. 

However, problems arise when gradient-based sampling algorithms such as HMC or Langevin dynamics only directly apply to continuous distributions. Therefore, to apply such algorithms to sample from discrete distributions, prior works that developed gradient-based sampling for energy-based text generation all focus on continuous relaxations of the underlying discrete space. %
In particular, \citet{qin2022cold} allows the discrete EBM to assume inputs in the entire continuous $\R^d$ which will lead to the language model taking input vectors that do not correspond to any word embeddings;
\citet{kumar-2022} on the other hand allow the sample trajectory to traverse outside of the discrete word embedding space but eventually project them back;
\citet{amini2023structured} uses Voronoi tessellation to relax the discrete distribution over word embeddings into a piecewise continuous distribution with the embeddings as the centers of the Voronoi cells.
Unfortunately, none of these continuous relaxation techniques resulted in a sampler that can correctly sample from their target energy-based distribution over text.

\subsection{Faithfulness of Gradient-based Text Samplers}
\label{sec:faithful-text-samplers}

In this section, we explain and illustrate in detail why existing methods fail to converge to their intended distributions and thus are unfaithful samplers. To do so, we consider the setting of sampling a sequence of $N$ words $\vw=w_1\cdots w_N\in\alphabet^N$ from an energy-based sequence model.
We denote the corresponding word embeddings $\vx=(\vx_1, \dots,\vx_N)\in \calX\defeq \calV^{N}\subset\R^{Nd}$ where $\calV\subset\R^d$ is the discrete set of word embeddings.

\paragraph{\cold \textmd{\citep{qin2022cold}}.} \cold observes that, while the EBM induced from a language model is defined as
\begin{align}
    \label{eq:lm-ebm}
    \pilm(\vx)=\frac{\exp(-\ulm(\vx))}{\sum_{\vy\in\calX} \exp(-\ulm(\vy))}~,~\vx\in\calX,
\end{align}
where $\ulm$ as a function can also take vectors other than the word embeddings as its input.
\cold proceeds to use Langevin dynamics that include $U(\vx)$ as an energy function over the continuously relaxed space.\footnote{In practice, \cold operates in logits space and uses a weighted average of word embeddings, which corresponds to the convex cone of the word embedding space. However, this detail does not affect our following point.} 
In effect, \cold is sampling from a \textit{density} similar to the following
\begin{align}
    \label{eq:cold-continuous-relax}
    \pcoldlike(\vx) = \frac{\exp(-\ulm(\vx))}{\displaystyle \int_{\R^{nd}} \exp(-\ulm(\vy)) \mathrm{d}\vy}~,~ \vx\in\R^{Nd}.
\end{align}
Even though \cref{eq:lm-ebm} and \cref{eq:cold-continuous-relax} superficially have the same numerator, the two distributions can have drastically different features or may even be unrelated.\footnote{In fact, if the integral in \cref{eq:cold-continuous-relax} does not converge, the sampling process can escape to infinity (i.e., the measure is not well-defined).}
This means that, when \cold performs Langevin dynamics over \cref{eq:cold-continuous-relax}, its samples cannot be regarded as from \cref{eq:lm-ebm}.
We will illustrate this further in \cref{ex:toy-lm}.

\paragraph{\mucola \textmd{\citep{kumar-2022}}.} Similar to \cold, \mucola also takes Langevin steps in the underlying continuous space $\R^{Nd}$ but eventually projects back to the embedding space using Euclidean distance:
\begin{align}
    \label{eq:mucola-1}
    \vx'=\proj_{\calX}\left(
        \vx-\frac{\alpha}{2}\nabla U(\vx)+\sqrt{\alpha} \vxi
    \right)
\end{align}
where $\vxi\sim\calN(0,I)$ is the Gaussian noise vector. 
This procedure cannot sample from the continuous distribution in \cref{eq:lm-ebm}, because the gradients of $U$ at discrete points in $\calX$ do not determine the values of $U$ in $\calX$ even up to an additive constant, and hence do not uniquely determine the distribution. 
Again, see \cref{ex:toy-lm} for an illustration. %

\paragraph{\svs \textmd{\citep{amini2023structured}}.}
As mentioned earlier, the continuous relaxation resulted from \svs is piecewise continuous, with each continuous region being a Voronoi cell.
To be able to sample from the correct distribution, \svs requires high-dimensional Gaussian integral over the Voronoi cells. For reasons which we will detail in \cref{sec:high-dim-int}, this integral is unfortunately infeasible to compute.
As a result, \svs makes the assumption that all Voronoi cells in the word embedding space in pretrained LMs have equal Gaussian volume.
We note that the equal measure assumption in \svs \textit{can} be true in very specific circumstances, such as in binary discrete distributions, e.g., the Ising models.
Unfortunately, such an assumption is unrealistic for real-world language models.

\begin{figure}[t]
    \centering
    \includegraphics[width=0.9\columnwidth]{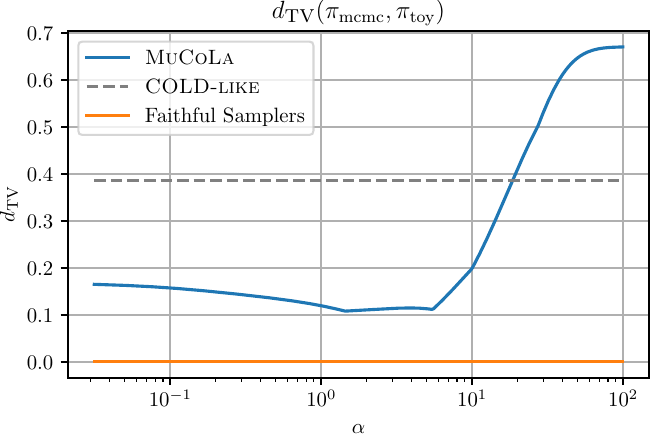}
    \caption{Total variation distance between $\pi_\mathrm{mcmc}$, the limiting distribution of MCMC algorithms from previous works, and $\pi_\toy$, the toy language model distribution from \cref{ex:toy-lm}.
    $\pi_\mathrm{mcmc}$ is computed with spectral decomposition when possible.
    We can observe that the limiting distribution of \cold is far from the target distribution, and \mucola, depending on its step size $\alpha$, may be close to the target distribution. Nevertheless, it does not have the correct distribution for any $\alpha$.
    }
    \label{fig:tv-distances}
\end{figure}
\begin{myexample}[A Toy Energy-based LM]
\label{ex:toy-lm}
To further illustrate the previous claims, we consider a toy energy-based LM over a sequence of $N$ tokens, with a binary vocabulary and a one-dimensional embedding $\calV=\alphabet=\{-1,+1\}$. The energy function we use has the following form
\begin{align}
    U(\vx)=-\beta(\textstyle\frac{1}{2}\vx^\top A \vx+\vb^\top\vx).
\end{align}
with $\vx\in\calV^N$ and $\pi_\toy(\vx)\propto\exp(-U(\vx))$.
Concretely, we set $A$ to be the adjacency matrix of an $N$-cycle and $\vb=\bm{0}$.
In this setting, our energy-based LM in fact corresponds to a linear-chain Ising model with zero magnetic field.

We choose this model for the following reasons:\looseness=0
\begin{enumerate}
    \item The energy function is differentiable, and hence all previous algorithms apply;
    \item When $N$ is not too large, we can compute the exact distribution;
    \item The binary vocabulary allows us to compute the transition matrix of \mucola exactly as well as its stationary distribution.\footnote{The transition probability of \mucola is in general infeasible to compute. See \cref{sec:high-dim-int-in-mcmc}.}
\end{enumerate}
We use spectral decomposition of the transition matrix to calculate the exact stationary distribution of \mucola.
For \cold, we estimate the multi-dimensional Gaussian's quadrant probabilities with 1 million samples.

From \cref{fig:tv-distances}, we can see that \cold has a very different distribution compared to the toy language model distribution $\pi_\toy$, as we remarked earlier; 
On the other hand, we interestingly observe that, for a certain range of $\alpha$, \mucola can in fact approximate the true distribution fairly well. This may explain the fact that \mucola performs better than \cold in actual language generation tasks. Nevertheless, \mucola is not able to sample from the true distribution regardless of the value of $\alpha$.\footnote{We also note that, in this specific model, \svs is able to sample from the correct distribution because the Voronoi cells induced by the embeddings have equal measure due to symmetry. However, this is not true in general language models.}
\end{myexample}

\section{Background: MCMC}

Before we investigate the question of designing faithful and efficient gradient-based samplers for text generation, we first review the foundational principles that underpin the validity of the MCMC procedure.

Markov Chain Monte Carlo (MCMC; \citealt{metropolis1953equation}) is based on the idea that to produce samples from a target distribution $\pi(x)$, one can design a transition kernel $p(x'\mid x)$ such that the resulting Markov chain has the target distribution as its limiting distribution.
In finite discrete spaces, such as sampling sentences up to a fixed length, one designs the MCMC transition kernel to satisfy the following two criteria to guarantee convergence to some intended distribution $\pi(x)$:

\begin{enumerate}[label=(C\arabic*)]
    \item\label{item:mc-req-ergodic} \textit{The chain is ergodic.}
    This means that, regardless of the starting state, the chain has a nonzero probability of being at every state after a sufficient number of steps.
    Ergodicity is equivalent to being irreducible and aperiodic.
    \item\label{item:mc-req-invariance} \textit{The target distribution is invariant under the transition kernel.} 
    This means that, if the chain starts with the target distribution, it will stay in the target distribution, i.e. 
    \begin{align}
        \pi(x)=\sum_{y} p(x \mid y)\pi(y).
    \end{align}
\end{enumerate}

The reason that the above two criteria guarantee convergence to the target distribution is very simple. First of all, all finite state Markov chains have at least one stationary distribution. 
Adding the ergodicity requirement \ref{item:mc-req-ergodic} guarantees that the chain has a unique stationary distribution and the chain converges to that distribution, and \ref{item:mc-req-invariance} ensures that the target distribution $\pi(x)$ \emph{is} this unique stationary distribution. Therefore, \ref{item:mc-req-ergodic} and \ref{item:mc-req-invariance} combined imply that the chain will always converge to the target distribution regardless of its starting state.

In practice, \ref{item:mc-req-invariance} is often proved by establishing the detailed balance equation
\begin{equation}
    \label{eq:detailed-balance}
    \pi(x)p(x'\mid x)=\pi(x')p(x\mid x')
\end{equation}
which implies that $\pi(x)$ is a stationary distribution of \mbox{$p(\cdot\mid\cdot)$}.
When \cref{eq:detailed-balance} holds for a given Markov chain $p(\cdot\mid\cdot)$, we also say that the chain is reversible with respect to distribution $\pi(\cdot)$ and $\pi(\cdot)$ a \textit{reversing distribution} for $p(\cdot\mid\cdot)$.

Algorithmically, detailed balance (\cref{eq:detailed-balance}) is often achieved by using the Metropolis--Hastings acceptance procedure \citep{metropolis1953equation,hastings1970}.

\paragraph{Metropolis--Hastings Acceptance.}
Metropolis--Hastings acceptance is a procedure to convert \textit{any} Markov kernel $q(\cdot\mid\cdot)$ over $\calX$, called a \textit{proposal distribution}, into one that has the target distribution as its stationary. In each iteration, it draws a sample $x'$ from $q(\cdot\mid x)$ and then \textit{accepts} $x'$ with the \textit{acceptance probability}
\begin{align}
    \label{eq:metropolis}
    \alpha(x'\mid x) = \min\left\{
        1, \frac{\pi(x')q(x\mid x')}{\pi(x)q(x'\mid x)}
    \right\}.
\end{align}
In the case $x'$ is rejected, the chain remains at $x$.
One easily checks that the chain derived from the acceptance procedural $p(x'\mid x)=\alpha(x'\mid x)q(x'\mid x)$ is a reversible chain with $\pi(\cdot)$ as its reversing distribution.

In this work, unless otherwise stated, all our algorithms are corrected with Metropolis--Hastings and hence we only need to specify the proposal distribution $q(\cdot\mid\cdot)$.
However, we feel important to point out that Metropolis--Hastings isn't always necessary.
For example, by sampling from the true conditional, Gibbs sampling has a constant acceptance probability of 1, and hence the Metropolis--Hastings step can be omitted.
One may alternatively design an irreversible Markov kernel that directly satisfies \ref{item:mc-req-invariance} without satisfying \cref{eq:detailed-balance} (see, e.g., \citealp{sohl-dickstein-2014,diaconis-2000}).\looseness=-1

\paragraph{Mixing Time.}
We wish to design MCMC algorithms that converge to the target distribution in a reasonable amount of time, and hence
another important property of a given Markov chain is how fast it converges to the stationary distribution.
This quantity is measured by the \textit{mixing time}, $t_\mix$. 
Denoting $\P_{\vx}^{t}$ as the $t^\text{th}$ step distribution of a Markov chain started at state $\vx$, the $\varepsilon$-mixing time is defined as
\begin{align}
    \label{eq:tmix-def}
    t_\mix(\varepsilon) = \inf\left\{ t: \sup_{\vx\in\calX} d_\tv(\P_{\vx}^{t}, \pi) \leq \varepsilon \right\}
\end{align}
where $d_\tv(\cdot,\cdot)$ is the total variation distance\footnote{Recall that the total variation distance is defined as $d_\tv(\mu,\nu)\defeq\sup_{E}|\mu(E)-\nu(E)|$.} and $\pi$ is the stationary distribution of the Markov chain.
In words, $t_\mix(\varepsilon)$ is the minimum amount of time necessary to reach within $\varepsilon$ distance to the stationary distribution regardless of the starting state.

\section{Faithful Gradient-based Text Generation}
In this section, we focus on developing faithful samplers.
We first develop a Langevin-based sampler in \cref{sec:pncg}, which we term \pncg and discuss its theoretical properties in \cref{sec:pncg-properties}.
We then develop a Gibbs-based sampler in \cref{sec:gwl}.
We conclude with a discussion on hybrid samplers in \cref{sec:hybrid-samplers}.

\subsection{A Langevin-based Sampler}
\label{sec:pncg}

In our preliminary experiments, we found that \mucola \citep{kumar-2022}, a Langevin-based sampler, is the best candidate due to its simplicity and ability to generate relatively fluent sentences from LM-based EBMs.
An obvious solution is to add Metropolis--Hastings correction to \mucola.
Unfortunately, metropolizing \mucola will lead to the same high-dimensional integral that made the HMC-based sampler in \citet{amini2023structured} infeasible.\footnote{For details of why the integral shows up in metropolitan \mucola, see \cref{sec:high-dim-int-in-mcmc} for details.}
However, notice that the key property of the \mucola update equation
\begin{restatable}{donothing}{mucolaUpdate}
\begin{subequations}
\label{eq:mucola-update-eqn}
\begin{align}
    \vxi&\sim\calN(\bm{0},I) \\
    \vx'&=\proj_{\calX}\left(\vx-\frac{\alpha}{2}\nabla U(\vx) + \sqrt{\alpha}~\vxi\right)
\end{align}
\end{subequations}
\end{restatable}

\noindent is that the word embeddings further away from $\vmu_{\vx}\defeq\vx-\frac{\alpha}{2}\nabla U(\vx)$ will have lower probability to be sampled.
This insight indicates that the projection operator, which introduces the infeasible integral in the Metropolis--Hastings correction, is not necessary. 
One can introduce a discrete proposal distribution $q(\vx'\mid\vx)$ for $\vx'\in\calX$ with the same property by computing the ``Gaussian score'' at every word embedding and then normalizing, i.e.
\begin{subequations}
\begin{align}
    q(\vx'\mid&\vx)\propto
        \exp\left(-\frac{\left\|\vx'-\vmu_{\vx}\right\|_2^2}{2\alpha} \right) \\
    &=\exp\left(
        -\frac{1}{2\alpha}
        \left\|
            \vx'-\left(\vx-\frac{\alpha}{2}\nabla U(\vx)\right)
        \right\|_2^2
    \right) \label{eq:pncg-form-1}
\end{align}
\end{subequations}
With a few steps of derivation (detailed in \cref{sec:pncg-derivation}), we can rewrite the proposal in \cref{eq:pncg-form-1} as
\begin{align}
    \label{eq:pncg-form-2}
    \resizebox{0.89\hsize}{!}{$
    q(\vx'\mid\vx)\propto\exp\bigg(
        -\underbrace{\frac{1}{2}\nabla U(\vx)^\top(\vx'-\vx)}_{\mytag{Term (A1)}{term:A1}}
        -\underbrace{\frac{1}{2\alpha}\|\vx'-\vx\|_2^2}_{\mytag{Term (A2)}{term:A2}}
    \bigg).
    $}
\end{align}
Let us examine \cref{eq:pncg-form-2} more closely. We notice that \ref{term:A1} is in effect doing a first-order Taylor expansion, i.e., $U(\vx')-U(\vx)\approx\nabla U(\vx)^\top (\vx'-\vx)$, in an attempt to move to a state with lower energy.
On the other hand, first-order approximation is only accurate locally, and hence \ref{term:A2} acts as a regularizer that decreases the probability of moving to $\vx'$ that is too far from $\vx$.

Finally, when applying \cref{eq:pncg-form-2} to realistic language models such as GPT-2, we found that the $\ell_2$-norm penalty often runs into pathological situations where a few indices' large deviation disrupts the proposal distribution and results in low acceptance rate. We hypothesize that this is due to the unusual geometry of the underlying embedding space \cite{mimno-thompson-2017-strange} and found that using alternative norms is an effective remedy. We now arrive at our final form of proposal distribution:
\begin{align}
    \label{eq:pncg-final-form}
    \resizebox{.89\hsize}{!}{$
    q(\vx'\mid\,\vx) \propto\exp\bigg(
        -\underbrace{\frac{1}{2}\nabla U(\vx)^\top(\vx'-\vx)}_{\mytag{Term (B1)}{term:B1}} 
        -\underbrace{\frac{1}{2\alpha}\|\vx'-\vx\|_p^p}_{\mytag{Term (B2)}{term:B2}}
    \bigg).
    $}
\end{align}
We call this method $\ell_p$-{N}orm Constrained Gradient sampler (\textbf{$p$-NCG}), due to its connection to the Norm Constrained Gradient sampler proposed in \citet{rhodes2022enhanced}.%

\subsection{Properties of $p$-NCG}
\label{sec:pncg-properties}

\paragraph{Independence of Positions.}
Suppose we are sampling a sequence of length $N$ using the word embeddings: $\vx=[\vx_1~\dotsm~\vx_N]\in\R^{Nh}$ where each $\vx_n\in\calX\subset\R^h$ is a word embedding.
The proposal in \cref{eq:pncg-final-form} factorizes as a product that involves each individual word embedding:
\begin{align}
    \prod_{n=1}^N \exp\left(
        -\frac{1}{2}\nabla_n U(\vx)^{\top} (\vx'_n-\vx_n)
        -\frac{1}{2\alpha} \|\vx'_n-\vx_n\|_p^p
    \right).
\end{align}
This means that the proposal treats each word position conditionally independent of the other positions given the current sequence. This allows us to sample each word embedding in parallel from this proposal.

\paragraph{Convergence Analysis.} Another interesting property of the \pncg proposal is that when used unadjusted\footnote{As is standard in MCMC literature, we say that a proposal is used \textit{unadjusted} if we omit the Metropolist-Hastings correction and accept every sample.} on discrete log-quadratic distributions, such as the Ising models, its stationary distribution converges to the target distribution when the step size tends to zero. We make this precise below.
\begin{definition}
    \label{def:log-quad}
    Let $\pi(\vx)$ be a discrete distribution over $\calX\subset\R^d$ where $|\calX|<\infty$. $\pi$ is \emph{log-quadratic} if it can be expressed as
    \begin{align}
        \pi(\vx)\propto \exp\left( \vx^\top A \vx + \vb^\top \vx \right)
    \end{align}
    for some $A\in\R^{d\times d}$ and $\vb\in\R^d$.
\end{definition}

\begin{restatable}{theorem}{pncgConvergence}\label{thm:pncg-convergence}
    Let $\pi(\vx)$ be a discrete log-quadratic distribution as defined in \cref{def:log-quad}.
    For any $\alpha>0$, there exists a unique distribution $\pi_\alpha(\vx)$ such that the Markov chain defined by $q$ in \cref{eq:pncg-final-form} is reversible with respect to $\pi_\alpha$. Further, $\pi_\alpha \to \pi$ weakly as $\alpha\to 0$.
\end{restatable}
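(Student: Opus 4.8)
The plan is to prove the two assertions separately, exploiting throughout that $\calX$ is finite. First observe that the unadjusted chain with kernel $q(\cdot\mid\cdot)$ from \cref{eq:pncg-final-form} has strictly positive entries: $q(\vx'\mid\vx)>0$ for every pair $\vx,\vx'\in\calX$, since it is a normalized exponential over a finite set. Hence the chain is irreducible and aperiodic and possesses a \emph{unique} stationary distribution. Because any reversing distribution is in particular stationary, uniqueness of $\pi_\alpha$ is automatic once I exhibit a single reversing distribution; so the real content of the first claim is an explicit construction.

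To construct $\pi_\alpha$, I would write out the detailed-balance ratio and recognize it as a ratio of single-state functions. Let $Z_\alpha(\vx)$ denote the per-state normalizer of $q(\cdot\mid\vx)$. The symmetric penalty $\|\vx'-\vx\|_p^p=\|\vx-\vx'\|_p^p$ cancels immediately, leaving
\[
  \frac{q(\vx'\mid\vx)}{q(\vx\mid\vx')}
  =\frac{Z_\alpha(\vx')}{Z_\alpha(\vx)}\,
   \exp\!\Big(-\tfrac12\big(\nabla U(\vx)+\nabla U(\vx')\big)^{\!\top}(\vx'-\vx)\Big).
\]
Here the log-quadratic hypothesis enters: writing $U(\vx)=-(\vx^\top A\vx+\vb^\top\vx)$ with $A$ symmetric (WLOG), the gradient $\nabla U(\vx)=-2A\vx-\vb$ is \emph{affine}, so $\nabla U(\vx)+\nabla U(\vx')$ depends only on $\vx+\vx'$. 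A short computation using symmetry of $A$ then telescopes the exponent exactly into $\log\pi(\vx')-\log\pi(\vx)$, giving $q(\vx'\mid\vx)/q(\vx\mid\vx')=Z_\alpha(\vx')\pi(\vx')/(Z_\alpha(\vx)\pi(\vx))$. One then simply reads off the reversing distribution $\pi_\alpha(\vx)\propto Z_\alpha(\vx)\,\pi(\vx)$, which satisfies detailed balance by construction and is a genuine probability distribution after normalization (every factor is positive). This settles existence, and with it uniqueness.

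I expect this affine-gradient cancellation to be the crux. It is a computation rather than a deep fact, but it is the one place where the hypothesis of \cref{def:log-quad} is indispensable: for a non-log-quadratic energy the combination $\nabla U(\vx)+\nabla U(\vx')$ no longer collapses into a difference of single-state potentials, the Kolmogorov cycle condition generically fails, and the unadjusted chain then admits no reversing distribution at all. I would therefore carry out the quadratic bookkeeping carefully and flag exactly which step breaks in the general case.

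For the weak-convergence claim, since $\calX$ is finite, weak convergence coincides with pointwise convergence of the masses (and with convergence in $d_\tv$), so it suffices to control $\pi_\alpha(\vx)=Z_\alpha(\vx)\pi(\vx)/\sum_{\vy}Z_\alpha(\vy)\pi(\vy)$ as $\alpha\to0$. In
\[
  Z_\alpha(\vx)=\sum_{\vy\in\calX}\exp\!\Big(-\tfrac12\nabla U(\vx)^\top(\vy-\vx)-\tfrac1{2\alpha}\|\vy-\vx\|_p^p\Big),
\]
the $\vy=\vx$ term equals exactly $1$, while for $\vy\neq\vx$ the penalty $-\tfrac1{2\alpha}\|\vy-\vx\|_p^p\to-\infty$ drives the term to $0$ as $\alpha\to0$. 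Hence $Z_\alpha(\vx)\to1$ for every $\vx$, the normalizer $\sum_{\vy}Z_\alpha(\vy)\pi(\vy)\to1$, and therefore $\pi_\alpha(\vx)\to\pi(\vx)$ pointwise. On a finite space this is precisely the claimed weak convergence, completing the proof.
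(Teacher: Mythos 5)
Your proposal is correct and follows essentially the same route as the paper's proof: positivity of $q_\alpha$ gives ergodicity and hence uniqueness, the quadratic structure of $U$ collapses the asymmetric part of the proposal into a potential difference yielding the same reversing distribution $\pi_\alpha(\vx)\propto Z_\alpha(\vx)\,\pi(\vx)$, and weak convergence follows from $Z_\alpha(\vx)\to 1$ exactly as in the paper. The only (cosmetic) difference is that you perform the cancellation inside the detailed-balance ratio via the averaged-gradient identity $\tfrac12\bigl(\nabla U(\vx)+\nabla U(\vx')\bigr)^{\top}(\vx'-\vx)=U(\vx')-U(\vx)$, whereas the paper first rewrites $q_\alpha$ itself using the exact second-order Taylor expansion and cancels the symmetric quadratic term---algebraically the same step.
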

\begin{proof}[Proof Idea]
    The key insight of the proof is that first-order approximation of a quadratic energy function will leave a symmetric second-order error term. One can exploit this symmetry to construct a reversing distribution  and show that it converges to the target distribution. See \cref{sec:pncg-convergence-proof} for the full proof.
\end{proof}

\paragraph{Mixing-time Analysis.} When unadjusted proposals exhibit limiting behaviors as in \cref{thm:pncg-convergence}, it is tempting to use the proposal without using Metropolis--Hastings correction, as argued in \citet{zhang2022}. However, as \cref{thm:pncg-mixing-time} shows, the mixing time increases exponentially as the step size decreases towards 0. This means that, in practice, using the unadjusted proposal with a small step size is infeasible.

\begin{restatable}{theorem}{pncgMixing}\label{thm:pncg-mixing-time}
    Let $\pi(\vx)$ be a discrete log-quadratic distribution as defined in \cref{def:log-quad}. There exists constants $c_1, c_2, Z>0$ that depends only on $\pi(\vx)$ such that the mixing time of $q$ satisfies
    \begin{align}
        t_{\mix}(\varepsilon) \geq \left(\frac{c_1}{Z}\exp\left(\frac{c_2}{2\alpha}\right) - 1 \right) \log\left(\frac{1}{2\varepsilon}\right).
    \end{align}
\end{restatable}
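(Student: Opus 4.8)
The plan is to reduce the statement to a lower bound on the \emph{relaxation time} $t_{\mathrm{rel}}\defeq 1/\gamma_*$, where $\gamma_*$ is the absolute spectral gap of the chain, and then invoke the standard inequality $t_\mix(\varepsilon)\geq(t_{\mathrm{rel}}-1)\log\frac{1}{2\varepsilon}$, which holds for any reversible, irreducible, aperiodic chain. The proposal $q(\vx'\mid\vx)$ in \cref{eq:pncg-final-form} is strictly positive for every pair $\vx,\vx'\in\calX$ (the exponential never vanishes), so the chain is irreducible and, since $q(\vx\mid\vx)>0$, aperiodic; by \cref{thm:pncg-convergence} it is reversible with respect to its unique stationary distribution $\pi_\alpha$. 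Hence the spectral machinery applies, and it suffices to show that $\gamma_*$ is exponentially small in $1/\alpha$. I would obtain this from a bottleneck (conductance) bound.

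The heart of the argument is a uniform bound on the one-step escape probability. Since $\calX=\calV^N$ is finite, set $c_2\defeq\min_{\vx\neq\vy\in\calX}\|\vx-\vy\|_p^p>0$ and $G\defeq\max_{\vx,\vy\in\calX}\frac12|\nabla U(\vx)^\top(\vy-\vx)|<\infty$; both depend only on $\calX$, $A$, and $\vb$, i.e.\ only on $\pi$ (recall $\nabla U$ is bounded on the finite set $\calX$). In the normalized form of $q$, the denominator $\sum_{\vy\in\calX}\exp(-\frac12\nabla U(\vx)^\top(\vy-\vx)-\frac{1}{2\alpha}\|\vy-\vx\|_p^p)$ is at least $1$ (the $\vy=\vx$ term contributes $e^0$), while each of the at most $|\calX|$ numerator terms with $\vy\neq\vx$ is bounded by $e^{G}e^{-c_2/(2\alpha)}$. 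Writing $C\defeq|\calX|e^{G}$, this yields, uniformly over $\vx$,
\begin{align}
    1-q(\vx\mid\vx)=\sum_{\vy\neq\vx}q(\vy\mid\vx)\;\leq\; C\exp\!\left(-\frac{c_2}{2\alpha}\right).
\end{align}
This captures the stickiness of the sampler: as $\alpha\to0$ the penalty \ref{term:B2} makes any move off the current state exponentially unlikely.

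For any nonempty proper $S\subset\calX$, the bottleneck ratio then satisfies $\Phi(S)=\pi_\alpha(S)^{-1}\sum_{\vx\in S}\pi_\alpha(\vx)\sum_{\vy\notin S}q(\vy\mid\vx)\leq\pi_\alpha(S)^{-1}\sum_{\vx\in S}\pi_\alpha(\vx)\,Ce^{-c_2/(2\alpha)}=Ce^{-c_2/(2\alpha)}$, where the crucial point is that the stationary weights $\pi_\alpha$ cancel, so the constant carries no hidden $\alpha$-dependence through $\pi_\alpha$. Taking the minimum gives $\Phi_*\leq Ce^{-c_2/(2\alpha)}$. The easy direction of Cheeger's inequality gives $\gamma\leq 2\Phi_*$ for the ordinary gap $\gamma=1-\lambda_2$, and since the absolute gap never exceeds the ordinary gap, $\gamma_*\leq\gamma\leq 2Ce^{-c_2/(2\alpha)}$. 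Therefore $t_{\mathrm{rel}}=1/\gamma_*\geq\frac{1}{2C}e^{c_2/(2\alpha)}$, and substituting into the relaxation-time lower bound proves the claim with $c_1/Z=1/(2C)$ (e.g.\ $c_1=1$, $Z=2C$). For large $\alpha$ the right-hand side is non-positive and the bound is vacuous; the advertised exponential obstruction is the small-$\alpha$ regime.

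The main obstacle is the requirement that $c_1,c_2,Z$ depend \emph{only} on $\pi$ and not on $\alpha$, and this is precisely why I route through conductance rather than estimating $\pi_\alpha$ directly: the $\pi_\alpha$ weights cancel in $\Phi(S)$, leaving a purely geometric constant governed by the minimum inter-embedding distance $c_2$ and the finite range $G$ of the linearized energy. The only other point needing care is the applicability of the spectral and relaxation-time inequalities, which requires reversibility (supplied by \cref{thm:pncg-convergence}) and ergodicity (immediate from the strict positivity of $q$ on $\calX\times\calX$); no laziness assumption is needed, since $\gamma_*\leq\gamma$ holds unconditionally.
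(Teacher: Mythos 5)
Your proof is correct, and it reaches the paper's bound by a genuinely different spectral argument. Both proofs share the same core estimate: a uniform exponential bound on the one-step escape probability $\sum_{\vy\neq\vx} q(\vy\mid\vx)$, obtained by lower-bounding the normalizer by the $\vy=\vx$ term and upper-bounding each off-diagonal term by $e^{-c_2/(2\alpha)}$ times an $\alpha$-free constant; and both finish with the relaxation-time lower bound from \citet{levin2017} (\cref{thm:mixing-spectral}), with reversibility supplied by \cref{thm:pncg-convergence}. The differences are in (i) how the $\alpha$-free constant is produced and (ii) how the escape bound is converted into a bound on $\lambda_2$. For (i), the paper works with the rewritten proposal \cref{eq:pncg-alpha-proposal-normalized}, controlling the quadratic cross term via the Rayleigh--Ritz inequality and the normalization $U\leq 0$; this yields the model-specific constants $c_1=\frac{1}{2}\exp(\lambda_{\min}(A)d_2/2)$, $c_2=d_p$, with $Z$ the actual partition function. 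You instead take a brute maximum $G\defeq\max_{\vx,\vy\in\calX}\frac{1}{2}\left|\nabla U(\vx)^\top(\vy-\vx)\right|$ over the finite set, giving the cruder constant $C=|\calX|e^{G}$ --- perfectly legitimate, since the theorem only asserts existence of constants depending on $\pi$, and with the side benefit that your escape estimate never uses the log-quadratic structure at all (log-quadraticity enters your argument only through reversibility via \cref{thm:pncg-convergence}, whereas the paper's constants exploit the quadratic form directly). For (ii), the paper applies the \gershgorin disc theorem (\cref{thm:gershgorin}), which requires the side remark that reversible transition matrices have real spectra, to trap $\lambda_2$ within the off-diagonal mass of some diagonal entry $q_\alpha(\vx\mid\vx)$; you instead bound the bottleneck ratio, $\Phi_*\leq Ce^{-c_2/(2\alpha)}$ --- correctly observing that the $\pi_\alpha$-weights cancel, so no hidden $\alpha$-dependence enters through the stationary distribution --- and invoke the easy direction of Cheeger, $\gamma\leq 2\Phi_*$, together with the unconditional $\gamma_*\leq\gamma$. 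The two conversions cost the same factor of $2$ and give identical asymptotics in $\alpha$: the Gershgorin route buys sharper, more interpretable constants tied to the energy landscape, while your conductance route uses more standard machinery, sidesteps the ``for at least one $\vx$'' bookkeeping in the Gershgorin step, and makes transparent why the bound is insensitive to $\pi_\alpha$ itself. Your handling of the remaining hypotheses (irreducibility and aperiodicity from strict positivity of $q$, including $q(\vx\mid\vx)>0$, and the vacuousness of the bound for large $\alpha$) is also sound.
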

\begin{proof}[Proof Idea]
    We use \gershgorin disc theorem (\cref{thm:gershgorin}) to bound the location of the eigenvalues and then relate it to mixing time through a well-known inequality (\cref{thm:mixing-spectral}).
    See \cref{sec:pncg-mixing-time} for the full proof.
\end{proof}

\subsection{A Gibbs-based Sampler}
\label{sec:gwl}

In this section, we consider adapting the Gibbs sampler \citep{geman1984gibbs}.
Again, consider sampling a sequence of length $N$ with word embeddings $\vx=[\vx_1 \dotsm \vx_N]\in\R^{Nh}$ where 
 each $\vx_n\in\calX\subset\R^h$ is a word embedding. 
To be able to use Gibbs sampling, we need to be able to efficiently compute the conditional probabilities $\pi(\vx_n\mid\vx_{\setminus n})$, which is infeasible as we argued in \cref{sec:sampling-from-ebms}.

However, we recall the fact that Gibbs sampling is just a special case of Metropolis--Hastings, where the use of exact conditional $\pi(\vx_n\mid\vx_{\setminus n})$ results in an acceptance probability of 1.
We may therefore use an approximation of $\pi(\vx_n\mid\vx_{\setminus n})$ and correct for the approximation error with Metropolis--Hastings.
Specifically, we approximate $\pi(\vx_n\mid\vx_{\setminus n})$ by estimating the energy difference with Taylor expansion:
\begin{align}
    U(\dotsm,\hat{\vx}_n,\dotsm)-U(\dotsm,{\vx}_n,\dotsm)\approx\nabla_n U(\vx)^\top (\hat{\vx}_n-\vx_n)
\end{align}
and then sample from
\begin{align}
    \exp(-\nabla_n U(\vx)^\top (\vx_n'-\vx_n)).
\end{align}
However, using the first-order approximation directly will lead to a near-zero acceptance rate due to the fact that local approximations have extremely high errors when used over the entire word embedding space. 
We therefore need to restrict the proposal move locally, which we again achieve by adding a $p$-norm penalty to our proposal.
This yields a Gibbs-based proposal
\begin{align}
    &q(\vx_n'\mid\vx_{\setminus n}) \propto \nonumber \\ &\enskip\exp\left(
        -\nabla_n U(\vx)^\top (\vx_n'-\vx_n)-\frac{1}{\alpha}\|\vx_n'-\vx_n\|_p^p
    \right). \label{eq:gwl-proposal}
\end{align}

An important caveat is that, since we are already using Metropolis--Hastings correction, it is a waste of computation to have self-transition probabilities in the proposal distribution.\footnote{For example, the Metropolis sampler never proposes self-transitions, which is part of the reason for why it is known to mix faster than the standard Gibbs sampler (Glauber dynamics) on Ising model \citep[\S31.1, p.403]{mackay2003} or other binary distributions \citep{newman1999monte}.}%
This led us to remove the self-transition probability and arrive at our final form of the Gibbs-based proposal
\begin{align}
    \label{eq:gwl-no-self-trans}
    q(\vx_n'\mid\vx_{\setminus n}) \propto 
    &\begin{cases}
        0 & \text{when $\vx_n=\vx_n'$} \\
        \textrm{\cref{eq:gwl-proposal}} & \text{otherwise}
    \end{cases}.
\end{align}
Notice that \cref{eq:gwl-proposal} resembles \cref{eq:pncg-final-form} except for the factor $1/2$ and the single word update. For this reason, we call this sampler \textit{Gibbs with Langevin} (GwL). 

\paragraph{Scan Ordering.}
As with other Gibbs samplers, the scan ordering (the order in which each index is sampled) can greatly impact the sampler's efficiency \citep{he2016scan-order}.\footnote{This is despite the fact that systematic scan and random scan have long been conjectured to have similar mixing times up to logarithmic factors \citep[\S26, Open Question 3]{levin2017}.} 
In light of this, we experiment with both systematic scan as well as random scan when using GwL.

\subsection{Hybrid Samplers}
\label{sec:hybrid-samplers}

One naturally wonders why would one use GwL when \pncg can update multiple words at a time.
We found through experiments that, in the beginning, when the sequence is randomly initialized, \pncg indeed proposes to change multiple indices at once and can have a reasonably high acceptance rate.
However, once the chain is close to convergence and the sentence structure starts to emerge, \pncg only proposes to change at most 1 index at a time and proposes self-transition for roughly 15\% of the time.
For this reason, GwL, which never proposes self-transitions, can have higher statistical efficiency in the later stages of the sampling process.
In practice, we implement a hybrid sampler, where we use \pncg during the initial phase of the sampler and switch to GwL once the chain starts to converge.\looseness=-1

\begin{figure}
    \centering
    \includegraphics[width=0.9\columnwidth]{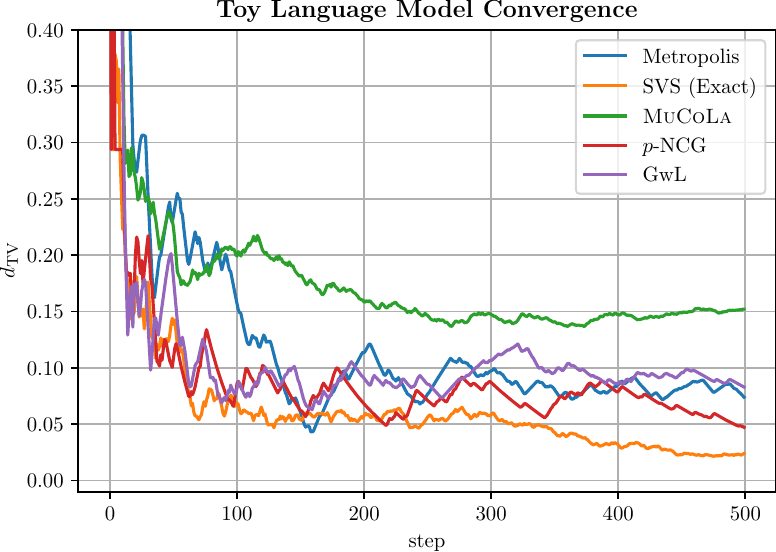}
    \caption{Total variation distance between the empirical distribution of different samplers (at different steps) and $\pi_\toy$, the true distribution of the toy language model from \cref{ex:toy-lm}.}
    \label{fig:toy-lm-convergence}
\end{figure}

\begin{figure}
    \centering
    \includegraphics[width=0.9\columnwidth]{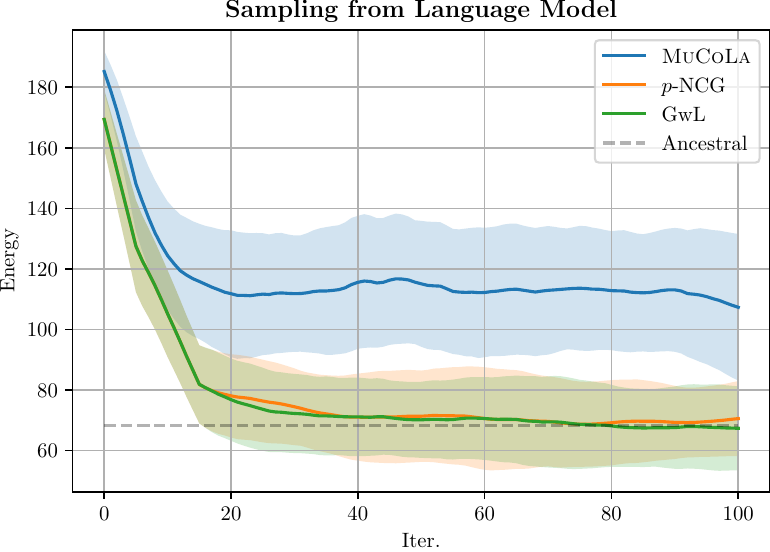}
    \caption{Energy traces of different samplers when sampling from GPT-2. 
    We observe that the faithful samplers (\pncg and GwL) converges to unbiased estimate of energy (estimated using ancestral sampling).
    On the other hand, the energy of the \mucola chain drops initially but suffers from systematic bias and is unable to converge to the true energy distribution, similar to the conclusion in our exact analysis in \cref{ex:toy-lm}.
    }
    \label{fig:lm-convergence}
\end{figure}

\begin{table*}[t!]
\centering 
\begin{tabular}{@{}lccccc@{}}\toprule
&  Success$(\uparrow)$ & PPL$(\downarrow)$ & Distinct-1$(\uparrow)$ & Distinct-2$(\uparrow)$ & Distinct-3$(\uparrow)$ \\ \midrule
{GPT-2} & $0.12 \pm 0.10$ & $5.10 \pm 2.06$ & $0.40$ & $0.56$ & $0.67$ \\
\fudge & $0.30 \pm 0.12 $ & $5.59 \pm 0.60$ & $0.39$ & $0.55$ & $0.65$\\ 
\mucola & $0.58 \pm 0.23$ & $33.09 \pm 36.32$ & $0.26$ & $0.4$ & $0.51$\\
\midrule
\svs-\langevin & $0.91 \pm 0.12$ & $14.26 \pm 2.55$ & $0.24$ & $0.39$ & $0.51$\\ 
\svs & $0.92 \pm 0.05 $ & $13.9 \pm 2.04$ & $0.22$ & $0.37$ & $0.49$\\
\midrule
$p$-NCG & $0.96 \pm 0.03 $ & $6.82 \pm 0.47$ & $0.23$ & $0.52$ & $0.78$ \\
$p$-NCG + GwL & $0.99 \pm 0.02$ & $5.17 \pm 0.38$ & $0.20$ & $0.44$ & $0.68$ \\
\bottomrule
\end{tabular}
\caption{Evaluation of different sampling methods on controlled generation, using three criteria: Success in following the control target determined by an external classifier (main metric), fluency (measured by perplexity), and diversity (measured by Distinct-$n$).}
\label{tab:control-results-avg}
\end{table*}

\section{Experiments}
\label{sec:experiments}

We now empirically assess the performance of our proposed samplers in a series of experiments.
The hyperparameter settings for all the experiments can be found in \cref{sec:experimental-setup}.

\subsection{Toy Example}
\label{sec:toy-lm-experiment}

We first apply different sampling methods to the toy language model discussed in \cref{ex:toy-lm}. Since we can compute $\pi_\toy$ exactly, we can compare the empirical distribution of the Markov chain up to a certain step to the true distribution by measuring the total variation distance between the two.\looseness=-1

We compare two of our proposed samplers, \pncg, and GwL, to baselines in prior works, \mucola and \svs.
We also included the standard Metropolis sampler for comparison.
Since \svs uses Gaussian augmentation \citep[\S4]{amini2023structured}, the resulting Hamiltonian yields a set of differential equations that can be solved in closed form. 
We therefore integrate the Hamiltonian dynamics exactly instead of using leapfrog steps, similar to the setup in \citet{pakman-paninski-nips2013}.\footnote{This exact integration is possible but can be difficult to implement when using \svs on actual language models. However, this is only an implementation speed up and does not alter the limiting distribution of \svs, which is incorrect for actual language models.} 
If the algorithm has a step size parameter, we tune this parameter via grid search.

The results are shown in \cref{fig:toy-lm-convergence}. We observe that the HMC-based sampler \svs has the best overall performance, due to its ability to traverse a long distance in the underlying space in a single step while preserving a perfect acceptance rate since its Hamiltonian is integrated exactly.
On the other hand, \pncg, GwL and Metropolis all have similar performance, partially due to the fact that the toy language model is too small in size to compare these samplers. Still, we can observe that all samplers except for \mucola are able to converge to the correct limiting distribution $\pi_\toy$, albeit at different rates.
Finally, we note that \mucola displays the systematic bias that we saw in \cref{ex:toy-lm}, where we calculated its stationary distribution exactly through spectral decomposition.
We see that \mucola's empirical distribution plateaus at a certain distance away from the true distribution.

\subsection{Sampling from Language Models}

Next, we apply our methods to sample from a language model.\footnote{We use the \texttt{GPT-2} checkpoint from the Huggingface library.}
As opposed to the experiment in \cref{sec:toy-lm-experiment}, the exact target distribution is not tractable.
We therefore use the energy distribution as a surrogate to measure whether the chain has converged.

In \cref{fig:lm-convergence}, we show the energy trace plots of different chains, and compare these against the mean energy of the target language model distribution. The mean energy shown is estimated with 2000 unbiased ancestral samples.
We observe that the faithful samplers, \pncg and GwL, quickly converge to the true energy distribution.
On the other hand, the energy of the \mucola chain decreases initially but is unable to converge to the true energy distribution since it is an unfaithful sampler.

\subsection{Controlled Generation}

Finally, we apply our methods to a controlled generation task.
We finetune GPT-2 on the E2E dataset \citep{novikova-etal-2017-e2e} which contains restaurant reviews for different types of food, e.g., Italian, Fast food, Japanese, etc.
We then task the model to generate a review of a specific food type $t\in\calT$.
To do so, we train classifiers to predict the food type $t$ from the input sequence $p_\cls(t\mid \vx)$ and use its log-likelihood of the food type as part of the energy function.
We implement the following baselines.

\paragraph{\fudge.} Introduced by \citet{yang-klein-2021-fudge}, \fudge samples tokens from the language model autoregressively, but weights the token probabilities at each position according to a classifier that determines whether the next token is likely to satisfy the constraint. In effect, by training classifiers to re-weight the per-step token probabilities under some global constraint, \fudge is distilling a globally-normalized EBM into a locally normalized one, which \citet{yang-klein-2021-fudge} aptly referred to as ``Future Discriminators''.

\paragraph{\mucola.} Introduced by \citet{kumar-2022}, \mucola forms a Markov chain using the update equation in \cref{eq:mucola-update-eqn} and defines the energy function as
\begin{align}
    U(\vx)=-\log\plm(\vx)-\beta \log p_\cls(t\mid \vx)
\end{align}
where $\beta$ is a hyperparameter intended to balance the classifier energy and the language model.

\paragraph{\svs and \svs-\langevin.} Introduced by \citet{amini2023structured}, both methods define a piecewise continuous distribution based on the Voronoi cells generated from the word embeddings. \svs-\langevin samples from this distribution using Langevin Dynamics, and \svs applies the appropriate form of HMC \citep{rhmc}.

\paragraph{Evaluation.} We sample 20 sentences of length 20 for each control target (resulting in a total of 140 sentences for each method). We evaluate the generations based on the following three metrics:
\begin{enumerate}
    \item \textbf{Success} is defined as the proportion of generations that followed the control target. To determine this, we train an external classifier (i.e., one that is different from the classifier that we use to generate the sequence).
    \item \textbf{Fluency} is measured by the perplexity under the language model.
    \item \textbf{Distinct-$n$} is a indicator of \textit{diversity}, which measures the ratio of unique $n$-grams in the set of generated samples.
\end{enumerate}

The results are shown in \cref{tab:control-results-avg}.
We also provide generated samples for each sampler in \cref{tab:control-sample} in \cref{sec:control-sample}.
We can see that, both \pncg and its hybrid variant with GwL almost always succeed in following the target while maintaining a high level of fluency. 
Notably, the hybrid sampler \pncg + GwL can maintain a level of fluency comparable to the unconditional language model while adhering to the control target.
This demonstrates that the faithful samplers are able to draw samples from the true distributions, which places high probability mass on the sequences with high fluency as well as better conformity with the constraints.
In contrast, \fudge obtains high fluency but often ignores the control target, while \svs and \svs-\langevin sacrifice fluency in exchange for better compliance with the control.

\section{Conclusions and Future Work}

In this work, we proposed two novel gradient-based samplers for generating text from energy-based models.
We analyzed and compared against previous works which we illustrated and proved to be unfaithful samplers, meaning that their limiting distribution is different from the text distribution they want to sample from.
We then demonstrated with experiments that faithful samplers have far better performance in realistic tasks on text generation in terms of both controllability as well as fluency.

We note that, while our work is a first step towards investigating principled probabilistic approaches of text generation, there are many possibilities in which our algorithms can be extended. For example, while we manually tune the step size $\alpha$ for each model, we may use automatic tuning methods introduced in \citet{hoffman2014nuts} that preserve detailed balance. We may also consider proposal merging algorithms used in \citet{horowitz1991,kennedy1991}.

\section*{Acknowledgements}

The experiments in this work was carried out at the Advanced Research Computing at Hopkins (ARCH) core facility, which is supported by the National Science Foundation (NSF) grant number OAC 1920103. 
Afra Amini is supported by the ETH AI Center doctoral fellowship.
We thank Justin T. Chiu for helpful discussion and comments.

\bibliography{custom}

\begin{thebibliography}{64}
\expandafter\ifx\csname natexlab\endcsname\relax\def\natexlab#1{#1}\fi

\bibitem[{Amini et~al.(2023)Amini, Du, and Cotterell}]{amini2023structured}
Afra Amini, Li~Du, and Ryan Cotterell. 2023.
\newblock \href {https://openreview.net/forum?id=vf77fTbgG3} {Structured voronoi sampling}.
\newblock In \emph{Thirty-seventh Conference on Neural Information Processing Systems}.

\bibitem[{Bingham et~al.(2018)Bingham, Chen, Jankowiak, Obermeyer, Pradhan, Karaletsos, Singh, Szerlip, Horsfall, and Goodman}]{bingham2018pyro}
Eli Bingham, Jonathan~P. Chen, Martin Jankowiak, Fritz Obermeyer, Neeraj Pradhan, Theofanis Karaletsos, Rohit Singh, Paul Szerlip, Paul Horsfall, and Noah~D. Goodman. 2018.
\newblock {Pyro}: Deep universal probabilistic programming.
\newblock \emph{Journal of Machine Learning Research}.

\bibitem[{Brown et~al.(2020)Brown, Mann, Ryder, Subbiah, Kaplan, Dhariwal, Neelakantan, Shyam, Sastry, Askell, Agarwal, Herbert-Voss, Krueger, Henighan, Child, Ramesh, Ziegler, Wu, Winter, Hesse, Chen, Sigler, Litwin, Gray, Chess, Clark, Berner, McCandlish, Radford, Sutskever, and Amodei}]{brown2020gpt3}
Tom Brown, Benjamin Mann, Nick Ryder, Melanie Subbiah, Jared~D Kaplan, Prafulla Dhariwal, Arvind Neelakantan, Pranav Shyam, Girish Sastry, Amanda Askell, Sandhini Agarwal, Ariel Herbert-Voss, Gretchen Krueger, Tom Henighan, Rewon Child, Aditya Ramesh, Daniel Ziegler, Jeffrey Wu, Clemens Winter, Chris Hesse, Mark Chen, Eric Sigler, Mateusz Litwin, Scott Gray, Benjamin Chess, Jack Clark, Christopher Berner, Sam McCandlish, Alec Radford, Ilya Sutskever, and Dario Amodei. 2020.
\newblock \href {https://proceedings.neurips.cc/paper_files/paper/2020/file/1457c0d6bfcb4967418bfb8ac142f64a-Paper.pdf} {Language models are few-shot learners}.
\newblock In \emph{Advances in Neural Information Processing Systems}, volume~33, pages 1877--1901. Curran Associates, Inc.

\bibitem[{Carpenter et~al.(2017)Carpenter, Gelman, Hoffman, Lee, Goodrich, Betancourt, Brubaker, Guo, Li, and Riddell}]{carpenter2017stan}
Bob Carpenter, Andrew Gelman, Matthew~D Hoffman, Daniel Lee, Ben Goodrich, Michael Betancourt, Marcus Brubaker, Jiqiang Guo, Peter Li, and Allen Riddell. 2017.
\newblock \href {https://www.jstatsoft.org/article/view/v076i01} {{Stan}: A probabilistic programming language}.
\newblock \emph{Journal of statistical software}, 76(1).

\bibitem[{Chalkis et~al.(2021)Chalkis, Fisikopoulos, Tsigaridas, and Zafeiropoulos}]{chalkis2021metabolic}
Apostolos Chalkis, Vissarion Fisikopoulos, Elias~P. Tsigaridas, and Haris Zafeiropoulos. 2021.
\newblock \href {https://doi.org/10.4230/LIPIcs.SoCG.2021.21} {Geometric algorithms for sampling the flux space of metabolic networks}.
\newblock In \emph{International Symposium on Computational Geometry (SoCG 2021)}, pages 21:1--21:16.

\bibitem[{Cousins and Vempala(2016)}]{cousins2016practical}
Ben Cousins and Santosh Vempala. 2016.
\newblock \href {https://doi.org/10.1007/s12532-015-0097-z} {A practical volume algorithm}.
\newblock \emph{Mathematical Programming Computation}, 8(2):133--160.

\bibitem[{Cousins and Vempala(2014)}]{cousins2014gaussian-cooling}
Benjamin~R. Cousins and Santosh~S. Vempala. 2014.
\newblock \href {https://api.semanticscholar.org/CorpusID:9673496} {Gaussian cooling and o*(n3) algorithms for volume and gaussian volume}.
\newblock \emph{SIAM J. Comput.}, 47:1237--1273.

\bibitem[{Dathathri et~al.(2020)Dathathri, Madotto, Lan, Hung, Frank, Molino, Yosinski, and Liu}]{dathathri2020plug}
Sumanth Dathathri, Andrea Madotto, Janice Lan, Jane Hung, Eric Frank, Piero Molino, Jason Yosinski, and Rosanne Liu. 2020.
\newblock \href {https://openreview.net/forum?id=H1edEyBKDS} {Plug and play language models: A simple approach to controlled text generation}.
\newblock In \emph{International Conference on Learning Representations}.

\bibitem[{Deng et~al.(2020)Deng, Bakhtin, Ott, Szlam, and Ranzato}]{Deng2020Residual}
Yuntian Deng, Anton Bakhtin, Myle Ott, Arthur Szlam, and Marc'Aurelio Ranzato. 2020.
\newblock \href {https://openreview.net/forum?id=B1l4SgHKDH} {Residual energy-based models for text generation}.
\newblock In \emph{International Conference on Learning Representations}.

\bibitem[{Diaconis et~al.(2000)Diaconis, Holmes, and Neal}]{diaconis-2000}
Persi Diaconis, Susan Holmes, and Radford~M. Neal. 2000.
\newblock \href {https://doi.org/10.1214/aoap/1019487508} {{Analysis of a nonreversible Markov chain sampler}}.
\newblock \emph{The Annals of Applied Probability}, 10(3):726 -- 752.

\bibitem[{Du et~al.(2023)Du, Torroba~Hennigen, Pimentel, Meister, Eisner, and Cotterell}]{du2023measure}
Li~Du, Lucas Torroba~Hennigen, Tiago Pimentel, Clara Meister, Jason Eisner, and Ryan Cotterell. 2023.
\newblock \href {https://doi.org/10.18653/v1/2023.acl-long.543} {A measure-theoretic characterization of tight language models}.
\newblock In \emph{Proceedings of the 61st Annual Meeting of the Association for Computational Linguistics (Volume 1: Long Papers)}, pages 9744--9770, Toronto, Canada. Association for Computational Linguistics.

\bibitem[{Du and Mordatch(2019)}]{du2019implicit}
Yilun Du and Igor Mordatch. 2019.
\newblock \href {https://proceedings.neurips.cc/paper_files/paper/2019/file/378a063b8fdb1db941e34f4bde584c7d-Paper.pdf} {Implicit generation and modeling with energy based models}.
\newblock In \emph{Advances in Neural Information Processing Systems}, volume~32.

\bibitem[{Duane et~al.(1987)Duane, Kennedy, Pendleton, and Roweth}]{duane1987}
Simon Duane, A.~D. Kennedy, Brian~J. Pendleton, and Duncan Roweth. 1987.
\newblock \href {https://doi.org/https://doi.org/10.1016/0370-2693(87)91197-X} {Hybrid {M}onte {C}arlo}.
\newblock \emph{Physics Letters B}, 195(2):216--222.

\bibitem[{Durrett(2019)}]{durrett_2019}
Rick Durrett. 2019.
\newblock \href {https://doi.org/10.1017/9781108591034} {\emph{Probability: Theory and Examples}}, 5$^{\text{th}}$ edition.
\newblock Cambridge Series in Statistical and Probabilistic Mathematics. Cambridge University Press.

\bibitem[{Dyer and Frieze(1988)}]{dyer1988polytope}
Martin~E. Dyer and Alan~M. Frieze. 1988.
\newblock \href {https://api.semanticscholar.org/CorpusID:35165656} {On the complexity of computing the volume of a polyhedron}.
\newblock \emph{SIAM J. Comput.}, 17:967--974.

\bibitem[{Emiris and Fisikopoulos(2013)}]{emiris2013polytope-volume}
Ioannis~Z. Emiris and Vissarion Fisikopoulos. 2013.
\newblock \href {https://api.semanticscholar.org/CorpusID:372936} {Efficient random-walk methods for approximating polytope volume}.
\newblock \emph{Proceedings of the thirtieth annual symposium on Computational geometry}.

\bibitem[{Emiris and Fisikopoulos(2018)}]{emiris2018practical-polytope}
Ioannis~Z. Emiris and Vissarion Fisikopoulos. 2018.
\newblock \href {https://api.semanticscholar.org/CorpusID:49291768} {Practical polytope volume approximation}.
\newblock \emph{ACM Transactions on Mathematical Software (TOMS)}, 44:1 -- 21.

\bibitem[{Geman and Geman(1984)}]{geman1984gibbs}
Stuart Geman and Donald Geman. 1984.
\newblock \href {https://doi.org/10.1109/TPAMI.1984.4767596} {Stochastic relaxation, gibbs distributions, and the bayesian restoration of images}.
\newblock \emph{IEEE Transactions on Pattern Analysis and Machine Intelligence}, PAMI-6(6):721--741.

\bibitem[{Ghazvininejad et~al.(2017)Ghazvininejad, Shi, Priyadarshi, and Knight}]{ghazvininejad-etal-2017-hafez}
Marjan Ghazvininejad, Xing Shi, Jay Priyadarshi, and Kevin Knight. 2017.
\newblock \href {https://aclanthology.org/P17-4008} {{H}afez: an interactive poetry generation system}.
\newblock In \emph{Proceedings of {ACL} 2017, System Demonstrations}, pages 43--48, Vancouver, Canada. Association for Computational Linguistics.

\bibitem[{Goyal et~al.(2022)Goyal, Dyer, and Berg-Kirkpatrick}]{goyal2022exposing}
Kartik Goyal, Chris Dyer, and Taylor Berg-Kirkpatrick. 2022.
\newblock \href {https://openreview.net/forum?id=6PvWo1kEvlT} {Exposing the implicit energy networks behind masked language models via metropolis--hastings}.
\newblock In \emph{International Conference on Learning Representations}.

\bibitem[{Grathwohl et~al.(2021)Grathwohl, Swersky, Hashemi, Duvenaud, and Maddison}]{grathwohl2021oops}
Will Grathwohl, Kevin Swersky, Milad Hashemi, David Duvenaud, and Chris Maddison. 2021.
\newblock \href {https://proceedings.mlr.press/v139/grathwohl21a.html} {Oops i took a gradient: Scalable sampling for discrete distributions}.
\newblock In \emph{Proceedings of the 38th International Conference on Machine Learning}, volume 139 of \emph{Proceedings of Machine Learning Research}, pages 3831--3841. PMLR.

\bibitem[{Grenander and Miller(1994)}]{grenander1994langevin}
Ulf Grenander and Michael~I. Miller. 1994.
\newblock \href {http://www.jstor.org/stable/2346184} {Representations of knowledge in complex systems}.
\newblock \emph{Journal of the Royal Statistical Society. Series B (Methodological)}, 56(4):549--603.

\bibitem[{Hastings(1970)}]{hastings1970}
W.~K. Hastings. 1970.
\newblock \href {https://doi.org/10.1093/biomet/57.1.97} {{Monte Carlo sampling methods using Markov chains and their applications}}.
\newblock \emph{Biometrika}, 57(1):97--109.

\bibitem[{He et~al.(2016)He, De~Sa, Mitliagkas, and R\'{e}}]{he2016scan-order}
Bryan He, Christopher De~Sa, Ioannis Mitliagkas, and Christopher R\'{e}. 2016.
\newblock \href {https://proceedings.neurips.cc/paper_files/paper/2016/file/e4da3b7fbbce2345d7772b0674a318d5-Paper.pdf} {Scan order in {G}ibbs sampling: Models in which it matters and bounds on how much}.
\newblock In \emph{Advances in Neural Information Processing Systems}, volume~29.

\bibitem[{Hinton(2002)}]{hinton2002}
Geoffrey~E. Hinton. 2002.
\newblock \href {https://doi.org/10.1162/089976602760128018} {Training products of experts by minimizing contrastive divergence}.
\newblock \emph{Neural Comput.}, 14(8):1771–1800.

\bibitem[{Hoffman and Gelman(2014)}]{hoffman2014nuts}
Matthew~D. Hoffman and Andrew Gelman. 2014.
\newblock \href {http://jmlr.org/papers/v15/hoffman14a.html} {The {No-U-Turn} sampler: Adaptively setting path lengths in {H}amiltonian {M}onte {C}arlo}.
\newblock \emph{Journal of Machine Learning Research}, 15(47):1593--1623.

\bibitem[{Holtzman et~al.(2018)Holtzman, Buys, Forbes, Bosselut, Golub, and Choi}]{holtzman-etal-2018-learning}
Ari Holtzman, Jan Buys, Maxwell Forbes, Antoine Bosselut, David Golub, and Yejin Choi. 2018.
\newblock \href {https://doi.org/10.18653/v1/P18-1152} {Learning to write with cooperative discriminators}.
\newblock In \emph{Proceedings of the 56th Annual Meeting of the Association for Computational Linguistics (Volume 1: Long Papers)}, pages 1638--1649, Melbourne, Australia. Association for Computational Linguistics.

\bibitem[{Horn and Johnson(2012)}]{horn2013}
Roger~A. Horn and Charles~R. Johnson. 2012.
\newblock \href {https://doi.org/10.1017/CBO9781139020411} {\emph{Matrix Analysis}}, 2$^{\text{nd}}$ edition.
\newblock Cambridge University Press.

\bibitem[{Horowitz(1991)}]{horowitz1991}
Alan~M. Horowitz. 1991.
\newblock \href {https://api.semanticscholar.org/CorpusID:121631021} {A generalized guided monte carlo algorithm}.
\newblock \emph{Physics Letters B}, 268:247--252.

\bibitem[{Kennedy(1990)}]{kennedy1990}
A.~D. Kennedy. 1990.
\newblock \href {https://doi.org/10.1007/978-1-4615-3784-7_14} {\emph{The Theory of Hybrid Stochastic Algorithms}}, pages 209--223. Springer US, Boston, MA.

\bibitem[{Kennedy and Pendleton(1991)}]{kennedy1991}
A.D. Kennedy and Brian Pendleton. 1991.
\newblock \href {https://doi.org/https://doi.org/10.1016/0920-5632(91)90893-J} {Acceptances and autocorrelations in hybrid monte carlo}.
\newblock \emph{Nuclear Physics B - Proceedings Supplements}, 20:118--121.

\bibitem[{Keskar et~al.(2019)Keskar, McCann, Varshney, Xiong, and Socher}]{keskar2019ctrl}
Nitish~Shirish Keskar, Bryan McCann, Lav~R. Varshney, Caiming Xiong, and Richard Socher. 2019.
\newblock \href {http://arxiv.org/abs/1909.05858} {Ctrl: A conditional transformer language model for controllable generation}.

\bibitem[{Krause et~al.(2021)Krause, Gotmare, McCann, Keskar, Joty, Socher, and Rajani}]{krause-etal-2021-gedi-generative}
Ben Krause, Akhilesh~Deepak Gotmare, Bryan McCann, Nitish~Shirish Keskar, Shafiq Joty, Richard Socher, and Nazneen~Fatema Rajani. 2021.
\newblock \href {https://doi.org/10.18653/v1/2021.findings-emnlp.424} {{G}e{D}i: Generative discriminator guided sequence generation}.
\newblock In \emph{Findings of the Association for Computational Linguistics: EMNLP 2021}, pages 4929--4952, Punta Cana, Dominican Republic. Association for Computational Linguistics.

\bibitem[{Kumar et~al.(2022)Kumar, Paria, and Tsvetkov}]{kumar-2022}
Sachin Kumar, Biswajit Paria, and Yulia Tsvetkov. 2022.
\newblock \href {https://doi.org/10.48550/ARXIV.2205.12558} {Constrained sampling from language models via langevin dynamics in embedding spaces}.
\newblock In \emph{Proceedings of the 2022 Conference on Empirical Methods in Natural Language Processing}. Association for Computational Linguistics.

\bibitem[{LeCun et~al.(2007)LeCun, Chopra, Hadsell, Ranzato, and Huang}]{lecun2006ebm}
Yann LeCun, Sumit Chopra, Raia Hadsell, Marc'Aurelio Ranzato, and Fu~Jie Huang. 2007.
\newblock \href {https://doi.org/10.7551/mitpress/7443.003.0014} {{Energy-Based Models}}.
\newblock In \emph{{Predicting Structured Data}}. The MIT Press.

\bibitem[{Levin and Peres(2017)}]{levin2017}
David~A. Levin and Yuval Peres. 2017.
\newblock \href {http://pages.uoregon.edu/dlevin/MARKOV/} {\emph{Markov Chains and Mixing Times}}, 2\textsuperscript{nd} edition.
\newblock American Mathematical Soc.

\bibitem[{Lin et~al.(2021)Lin, Jaech, Li, Gormley, and Eisner}]{lin2021limitations}
Chu-Cheng Lin, Aaron Jaech, Xin Li, Matthew~R. Gormley, and Jason Eisner. 2021.
\newblock \href {https://doi.org/10.18653/v1/2021.naacl-main.405} {Limitations of autoregressive models and their alternatives}.
\newblock In \emph{Proceedings of the 2021 Conference of the North American Chapter of the Association for Computational Linguistics: Human Language Technologies}, pages 5147--5173, Online. Association for Computational Linguistics.

\bibitem[{MacKay(2003)}]{mackay2003}
David J.~C. MacKay. 2003.
\newblock \href {http://www.cambridge.org/0521642981} {\emph{Information Theory, Inference, and Learning Algorithms}}.
\newblock Cambridge University Press.
\newblock Available from {\url{http://www.inference.phy.cam.ac.uk/mackay/itila/}}.

\bibitem[{Martens and Sutskever(2010)}]{martens-sutskever2010gits}
James Martens and Ilya Sutskever. 2010.
\newblock \href {https://proceedings.mlr.press/v9/martens10a.html} {Parallelizable sampling of markov random fields}.
\newblock In \emph{Proceedings of the Thirteenth International Conference on Artificial Intelligence and Statistics}, volume~9 of \emph{Proceedings of Machine Learning Research}, pages 517--524, Chia Laguna Resort, Sardinia, Italy. PMLR.

\bibitem[{Metropolis et~al.(1953)Metropolis, Rosenbluth, Rosenbluth, Teller, and Teller}]{metropolis1953equation}
Nicholas Metropolis, Arianna~W. Rosenbluth, Marshall~N. Rosenbluth, Augusta~H. Teller, and Edward Teller. 1953.
\newblock \href {https://doi.org/10.1063/1.1699114} {Equation of state calculations by fast computing machines}.
\newblock \emph{The Journal of Chemical Physics}, 21(6):1087--1092.

\bibitem[{Mimno and Thompson(2017)}]{mimno-thompson-2017-strange}
David Mimno and Laure Thompson. 2017.
\newblock \href {https://doi.org/10.18653/v1/D17-1308} {The strange geometry of skip-gram with negative sampling}.
\newblock In \emph{Proceedings of the 2017 Conference on Empirical Methods in Natural Language Processing}, pages 2873--2878, Copenhagen, Denmark. Association for Computational Linguistics.

\bibitem[{Mireshghallah et~al.(2022)Mireshghallah, Goyal, and Berg-Kirkpatrick}]{mireshghallah-etal-2022-mix}
Fatemehsadat Mireshghallah, Kartik Goyal, and Taylor Berg-Kirkpatrick. 2022.
\newblock \href {https://doi.org/10.18653/v1/2022.acl-long.31} {Mix and match: Learning-free controllable text generation using energy language models}.
\newblock In \emph{Proceedings of the 60th Annual Meeting of the Association for Computational Linguistics (Volume 1: Long Papers)}, pages 401--415, Dublin, Ireland. Association for Computational Linguistics.

\bibitem[{{Mohasel Afshar} and Domke(2015)}]{rhmc}
Hadi {Mohasel Afshar} and Justin Domke. 2015.
\newblock \href {https://proceedings.neurips.cc/paper/2015/file/8303a79b1e19a194f1875981be5bdb6f-Paper.pdf} {Reflection, refraction, and {H}amiltonian {M}onte {C}arlo}.
\newblock In \emph{Advances in Neural Information Processing Systems}, volume~28.

\bibitem[{Neal(1993)}]{neal1993probabilistic}
Radford~M. Neal. 1993.
\newblock \href {https://bayes.wustl.edu/Manual/RadfordNeal.review.pdf} {\emph{Probabilistic Inference using {M}arkov chain {M}onte {C}arlo methods}}.
\newblock Department of Computer Science, University of Toronto Toronto, ON, Canada.

\bibitem[{Neal(2011)}]{neal2011hmc}
Radford~M. Neal. 2011.
\newblock \href {https://doi.org/10.1201/b10905} {{MCMC} using {H}amiltonian dynamics}.
\newblock In Steve Brooks, Andrew Gelman, Galin Jones, and Xiao-Li Meng, editors, \emph{Handbook of Markov Chain Monte Carlo}, chapter~5. Chapman and Hall/{CRC}.

\bibitem[{Newman and Barkema(1999)}]{newman1999monte}
M.E.J. Newman and G.T. Barkema. 1999.
\newblock \href {https://books.google.ch/books?id=KKL2nQEACAAJ} {\emph{Monte Carlo Methods in Statistical Physics}}.
\newblock Oxford: Clarendon Press.

\bibitem[{Nishimura et~al.(2020)Nishimura, Dunson, and Lu}]{nishimura2020dhmc}
Akihiko Nishimura, David~B. Dunson, and Jianfeng Lu. 2020.
\newblock \href {https://doi.org/10.1093/biomet/asz083} {{Discontinuous Hamiltonian Monte Carlo for discrete parameters and discontinuous likelihoods}}.
\newblock \emph{Biometrika}, 107(2):365--380.

\bibitem[{Novikova et~al.(2017)Novikova, Du{\v{s}}ek, and Rieser}]{novikova-etal-2017-e2e}
Jekaterina Novikova, Ond{\v{r}}ej Du{\v{s}}ek, and Verena Rieser. 2017.
\newblock \href {https://doi.org/10.18653/v1/W17-5525} {The {E}2{E} dataset: New challenges for end-to-end generation}.
\newblock In \emph{Proceedings of the 18th Annual {SIG}dial Meeting on Discourse and Dialogue}, pages 201--206, Saarbr{\"u}cken, Germany. Association for Computational Linguistics.

\bibitem[{Pakman and Paninski(2013)}]{pakman-paninski-nips2013}
Ari Pakman and Liam Paninski. 2013.
\newblock \href {https://proceedings.neurips.cc/paper_files/paper/2013/file/a7d8ae4569120b5bec12e7b6e9648b86-Paper.pdf} {Auxiliary-variable exact hamiltonian monte carlo samplers for binary distributions}.
\newblock In \emph{Advances in Neural Information Processing Systems}, volume~26. Curran Associates, Inc.

\bibitem[{Pakman and Paninski(2014)}]{pakman2012gaussian-ehmc}
Ari Pakman and Liam Paninski. 2014.
\newblock \href {https://arxiv.org/abs/1208.4118} {Exact hamiltonian monte carlo for truncated multivariate gaussians}.
\newblock \emph{Journal of Computational and Graphical Statistics}, 23(2):518--542.

\bibitem[{Phan et~al.(2019)Phan, Pradhan, and Jankowiak}]{phan2019numpyro}
Du~Phan, Neeraj Pradhan, and Martin Jankowiak. 2019.
\newblock Composable effects for flexible and accelerated probabilistic programming in numpyro.
\newblock \emph{arXiv preprint arXiv:1912.11554}.

\bibitem[{Qin et~al.(2022)Qin, Welleck, Khashabi, and Choi}]{qin2022cold}
Lianhui Qin, Sean Welleck, Daniel Khashabi, and Yejin Choi. 2022.
\newblock \href {https://doi.org/10.48550/ARXIV.2202.11705} {{COLD} decoding: Energy-based constrained text generation with {L}angevin dynamics}.
\newblock In \emph{Advances in Neural Information Processing Systems}.

\bibitem[{Radford et~al.(2019)Radford, Wu, Child, Luan, Amodei, and Sutskever}]{radford2019language}
Alec Radford, Jeff Wu, Rewon Child, David Luan, Dario Amodei, and Ilya Sutskever. 2019.
\newblock Language models are unsupervised multitask learners.

\bibitem[{Raffel et~al.(2020)Raffel, Shazeer, Roberts, Lee, Narang, Matena, Zhou, Li, and Liu}]{raffel2020t5}
Colin Raffel, Noam Shazeer, Adam Roberts, Katherine Lee, Sharan Narang, Michael Matena, Yanqi Zhou, Wei Li, and Peter~J. Liu. 2020.
\newblock \href {http://jmlr.org/papers/v21/20-074.html} {Exploring the limits of transfer learning with a unified text-to-text transformer}.
\newblock \emph{Journal of Machine Learning Research}, 21(140):1--67.

\bibitem[{Rhodes and Gutmann(2022)}]{rhodes2022enhanced}
Benjamin Rhodes and Michael~U. Gutmann. 2022.
\newblock \href {https://openreview.net/forum?id=j2Mid5hFUJ} {Enhanced gradient-based {MCMC} in discrete spaces}.
\newblock \emph{Transactions on Machine Learning Research}.

\bibitem[{Roberts and Rosenthal(1998)}]{roberts1998optimal}
Gareth~O. Roberts and Jeffrey~S. Rosenthal. 1998.
\newblock \href {https://api.semanticscholar.org/CorpusID:5831882} {Optimal scaling of discrete approximations to langevin diffusions}.
\newblock \emph{Journal of the Royal Statistical Society: Series B (Statistical Methodology)}, 60.

\bibitem[{Roberts and Tweedie(1996)}]{roberts1996exponential}
Gareth~O. Roberts and Richard~L. Tweedie. 1996.
\newblock \href {https://projecteuclid.org/journals/bernoulli/volume-2/issue-4/Exponential-convergence-of-Langevin-distributions-and-their-discrete-approximations/bj/1178291835.full} {Exponential convergence of {L}angevin distributions and their discrete approximations}.
\newblock \emph{Bernoulli}, 2(4):341 -- 363.

\bibitem[{Sohl-Dickstein et~al.(2014)Sohl-Dickstein, Mudigonda, and DeWeese}]{sohl-dickstein-2014}
Jascha Sohl-Dickstein, Mayur Mudigonda, and Michael DeWeese. 2014.
\newblock \href {https://proceedings.mlr.press/v32/sohl-dickstein14.html} {Hamiltonian monte carlo without detailed balance}.
\newblock In \emph{Proceedings of the 31st International Conference on Machine Learning}, volume~32 of \emph{Proceedings of Machine Learning Research}, pages 719--726, Bejing, China. PMLR.

\bibitem[{Song and Ermon(2020)}]{song2020score-based}
Yang Song and Stefano Ermon. 2020.
\newblock \href {https://proceedings.neurips.cc/paper/2020/file/92c3b916311a5517d9290576e3ea37ad-Paper.pdf} {Improved techniques for training score-based generative models}.
\newblock In \emph{Advances in Neural Information Processing Systems}, volume~33, pages 12438--12448. Curran Associates, Inc.

\bibitem[{Welling and Teh(2011)}]{welling2011langevin}
Max Welling and Yee~Whye Teh. 2011.
\newblock \href {https://www.stats.ox.ac.uk/~teh/research/compstats/WelTeh2011a.pdf} {Bayesian learning via stochastic gradient {L}angevin dynamics}.
\newblock In \emph{Proceedings of the 28th International Conference on International Conference on Machine Learning}, page 681–688, Madison, WI, USA. Omnipress.

\bibitem[{Yang and Klein(2021)}]{yang-klein-2021-fudge}
Kevin Yang and Dan Klein. 2021.
\newblock \href {https://doi.org/10.18653/v1/2021.naacl-main.276} {{FUDGE}: Controlled text generation with future discriminators}.
\newblock In \emph{Proceedings of the 2021 Conference of the North American Chapter of the Association for Computational Linguistics: Human Language Technologies}, pages 3511--3535, Online. Association for Computational Linguistics.

\bibitem[{Zanella(2020)}]{zanella2020}
Giacomo Zanella. 2020.
\newblock \href {https://doi.org/10.1080/01621459.2019.1585255} {Informed proposals for local {MCMC} in discrete spaces}.
\newblock \emph{Journal of the American Statistical Association}, 115(530):852--865.

\bibitem[{Zhang et~al.(2022)Zhang, Liu, and Liu}]{zhang2022}
Ruqi Zhang, Xingchao Liu, and Qiang Liu. 2022.
\newblock \href {https://proceedings.mlr.press/v162/zhang22t.html} {A {L}angevin-like sampler for discrete distributions}.
\newblock In \emph{Proceedings of the 39th International Conference on Machine Learning}, volume 162 of \emph{Proceedings of Machine Learning Research}, pages 26375--26396. PMLR.

\bibitem[{Zhang et~al.(2012)Zhang, Ghahramani, Storkey, and Sutton}]{zhang2012gits}
Yichuan Zhang, Zoubin Ghahramani, Amos~J Storkey, and Charles Sutton. 2012.
\newblock \href {https://proceedings.neurips.cc/paper_files/paper/2012/file/c913303f392ffc643f7240b180602652-Paper.pdf} {Continuous relaxations for discrete hamiltonian monte carlo}.
\newblock In \emph{Advances in Neural Information Processing Systems}, volume~25. Curran Associates, Inc.

\end{thebibliography}
\bibliographystyle{acl_natbib}

\newpage
\appendix
\onecolumn

\section{Related Works}
\label{sec:related-works}

\paragraph{Controlled Generation.} Since the introduction of large pretrained language models, controlled generation, the ability to enforce controls during the text generation process has become an important research direction \citep[][\textit{inter alia}]{keskar2019ctrl,dathathri2020plug,krause-etal-2021-gedi-generative}. 
Earlier approaches in this direction includes weighted decoding \citep{ghazvininejad-etal-2017-hafez,holtzman-etal-2018-learning,yang-klein-2021-fudge}, which adjusts the language model score of each sequence with a function that measures how well it adheres to its control objectives and then try to decode the high scoring sequences.
More recently, several works formulated energy-based models using pretrained language models \citep{Deng2020Residual,goyal2022exposing} to express the control objective \citep{kumar-2022,qin2022cold,amini2023structured,mireshghallah-etal-2022-mix} and attempted to apply MCMC algorithms to sample from such sequence distribution.
When the underlying pretrained language model is a masked language model \citep{mireshghallah-etal-2022-mix}, the masked distributions are highly effective as approximations to the true conditionals and hence the Metropolis--Hastings corrected Gibbs-like scheme may work well without the need of gradient \citep{goyal2022exposing}.
However, when the underlying is causal \citep{kumar-2022,qin2022cold,amini2023structured}, which is the subject of this paper, there is no obvious choice of proposal distributions as discussed in \cref{sec:sampling-from-ebms}, and hence gradient information becomes valuable for deriving a proposal distribution without additional training.

\paragraph{Gradient-based Sampling} Our work is also related to the line of research that makes use of gradient information to sample from complex distributions \citep{duane1987,neal1993probabilistic,grenander1994langevin}.
In Bayesian inference, gradient-based samplers \cite{neal2011hmc,hoffman2014nuts} are known to be highly effective when sampling from high-dimensional continuous distributions \cite{carpenter2017stan,bingham2018pyro,phan2019numpyro}.
But it has been shown to be a difficult problem to adapt these algorithms in the discrete setting \citep{roberts1996exponential,roberts1998optimal}, with previous approaches including continuous relaxation within the discrete spaces \citep{pakman-paninski-nips2013} using discontinuous Hamiltonian Monte Carlo \citep{pakman2012gaussian-ehmc,rhmc,nishimura2020dhmc}, continuous relaxation via the ``Gaussian Integral Trick'' \citep{martens-sutskever2010gits,zhang2012gits}.
Specifically, the \pncg proposed in our work is a generalization of NCG proposed in \citet{rhodes2022enhanced} and the D-MALA proposed in \citet{zhang2022}, with the difference being using the $p$-norm constraint instead of the standard $\ell_2$ norm.
Our Gibbs-with-Langevin algorithm is also loosely related to the Gibbs-with-Gradient method proposed in \citet{grathwohl2021oops}, which we found to have near zero acceptance rate when applied to our setting.
We note that a range of recently proposed gradient-based samplers \citep{grathwohl2021oops,zhang2022,rhodes2022enhanced} are in some way connected to the locally balanced proposal from \citep{zanella2020}.

\section{On High-Dimensional Integration in Embedding Spaces}
\label{sec:high-dim-int}

\subsection{The Problem of Continuous Relaxation and High-Dimensional Integration}
\label{sec:high-dim-int-in-mcmc}

A common strategy to continuous relax discrete spaces is to map the discrete points into a continuous space and apply continuous gradient-based sampling algorithms \citep{pakman-paninski-nips2013,amini2023structured}.
This strategy gives rise to the problem of converting samples from continuous algorithms into discrete ones.
This problem is easier when the underlying discrete space is regularly shaped as in Ising model \citep{pakman-paninski-nips2013} where the projection function is as simple as the sign function $\sgn(\cdot)$.
When the underlying discrete space is irregularly shaped such as the word embedding space, one can use the Euclidean projection to convert a continuous sample $\vy\in\R^d$ into a discrete one $\vx\in\calX$, as in
\begin{align}
    \vx=\proj_{\calX} \vy.
\end{align}
This projection is used in both \citet{amini2023structured} and \citet{kumar-2022} and it creates a number of problems.

\paragraph{SVS.} In the case of \svs, \citet{amini2023structured} realized that the projection created a piecewise continuous relaxation, with each continuous region corresponding to a Voronoi cell
\begin{align}
    V_i = \left\{
        \vy~:~\|\vy-\vx_i\|_2\leq \|\vy-\vx_{i'}\|_2, \forall~i'\not=i
    \right\}
\end{align}
centering at a word embedding $\vx_i$. 
\citet{amini2023structured} then uses Gaussian augmentation within the Voronoi cells to apply gradient-based samplers.
To ensure that the continuously relaxed measure matches original the discrete measure, the underlying measure needs to be adjusted by the integral of the Gaussian truncated by the high-dimensional Voronoi polytope, otherwise known as the Gaussian volume of a polytope, defined as
\begin{align}
    \int_{V_i} \gamma^d(\vy; \vx, \sigma^2)\mathrm{d}\vy.
\end{align}
where $\gamma^d(\,\cdot\,; \vx, \sigma^2)$ denotes the $d$-dimensional Gaussian density centered at $\vx$ with variance $\sigma^2$.

\paragraph{\mucola.} By using the Euclidean projection operator in its update equation:
\mucolaUpdate*

\mucola similarly identifies each Voronoi region in $\R^d$ with the word embedding at its center. As we demonstrated in \cref{sec:faithful-text-samplers}, \mucola doesn't sample from its intended language distribution.
An obvious idea is then to apply Metropolis-Hasting correction to \mucola, which requires one to compute $q_\mucola(\vx_j\mid\vx_i)$ in the Metropolis-Hasting acceptance probability \cref{eq:metropolis}. Observing that
\begin{align}
    \proj_\calX \left(
        \vx_i-\frac{\alpha}{2}\nabla U(\vx_i) + \sqrt{\alpha} \vxi
    \right) = \vx_j ~\Leftrightarrow~ \vx_i-\frac{\alpha}{2}\nabla U(\vx_i) + \sqrt{\alpha} \vxi \in V_j,
\end{align}
we realize that computing $q_\mucola(\vx_j\mid\vx_i)$ is equivalent to computing the following integral
\begin{align}
\int_{V_j} \gamma^d\left(\vy; \vx_i-\frac{\alpha}{2}\nabla U(\vx_i), 1 \right) \mathrm{d}\vy
\end{align}
which is again the same high dimensional integral we encountered in \svs.

\subsection{The Difficulty of High-Dimensional Integration}
\label{sec:high-dim-infeasible}

In general, computing the volume of an explicit polytope is \#P-hard \citep{dyer1988polytope}, which makes exact computation infeasible for dimensions as high as that of GPT-2 or BERT.
Recent research on approximated high-dimensional integration shows great promises \citep{cousins2014gaussian-cooling,emiris2013polytope-volume} and saw such algorithms \citep{cousins2016practical,emiris2018practical-polytope} improved to the extent that they can be employed in various applied sciences \citep{chalkis2021metabolic}. Unfortunately, in our experimentation with these algorithms, we found that they can barely scales to dimensions beyond 100, not to mention the dimensions in GPT-2 or BERT, which are at the scale of $10^3$.
We therefore conclude that, at the current moment, the state of research in high-dimensional integration doesn't yet allow us to feasibly compute the relevant quantities so that \svs and \mucola can sample from the correct distribution.

\section{Derivation of $p$-NCG}
\label{sec:pncg-derivation}

We start with \cref{eq:pncg-form-1}
\begin{subequations}
\begin{align}
    q(\vx'\mid\vx) 
    &=\exp\left(
        -\frac{1}{2\alpha}
        \left\|
            \vx'-\left(\vx-\frac{\alpha}{2}\nabla U(\vx)\right)
        \right\|_2^2
    \right) \\
    &=\exp\left(
        -\frac{1}{2\alpha}
        \left\|
            (\vx'-\vx)+\frac{\alpha}{2}\nabla U(\vx)
        \right\|_2^2
    \right) \label{eq:pncg-derive-1}
\end{align}
\end{subequations}
where 
\begin{subequations}
\begin{align}
    &\phantom{=}\frac{1}{2\alpha}
        \left\|
            (\vx'-\vx)+\frac{\alpha}{2}\nabla U(\vx)
        \right\|_2^2 \\
    &=\frac{1}{2\alpha} \|\vx'-\vx\|_2^2 
        + 2\cdot\frac{1}{2\cancel{\alpha}}\left\langle \vx'-\vx,\frac{\cancel{\alpha}}{2}\nabla U(\vx) \right\rangle
        + \frac{1}{2\alpha}\cdot \frac{\alpha^2}{4} \left\| \nabla U(\vx)\right\|_2^2 \\
    &=\nabla U(\vx)^\top (\vx'-\vx) + \frac{1}{2\alpha} \|\vx'-\vx\|_2^2 + {\color{gray} \frac{\alpha}{8} \left\| \nabla U(\vx)\right\|_2^2 } \label{eq:pncg-derive-2}
\end{align}
\end{subequations}
Substituting \cref{eq:pncg-derive-2} into \cref{eq:pncg-derive-1}, we get
\begin{align}
    q(\vx'\mid\vx) \propto \exp\left(
        -\nabla U(\vx)^\top (\vx'-\vx) - \frac{1}{2\alpha} \|\vx'-\vx\|_2^2 - {\color{gray} \frac{\alpha}{8} \left\| \nabla U(\vx)\right\|_2^2 }
    \right)
\end{align}

Notice that the last term ${\color{gray} \frac{\alpha}{8} \left\| \nabla U(\vx)\right\|_2^2 }$ only contains $\vx$ and does not involve $\vx'$, so it will cancel with the same term in the normalizing constant. 
This means that we can omit this term from the proposal distribution. Taking this into account, we get the alternate form of the proposal as given in \cref{eq:pncg-form-2}:
\begin{align}
    q(\vx'\mid\vx) \propto \exp\left(
        -\nabla U(\vx)^\top (\vx'-\vx) - \frac{1}{2\alpha} \|\vx'-\vx\|_2^2
    \right).
\end{align}

\section{Proof of \cref{thm:pncg-convergence}}
\label{sec:pncg-convergence-proof}

\pncgConvergence*

We adapt the proof strategy from the proof of Theorem 1 in \citet{zanella2020} and from \citet{zhang2022}.

\begin{proof}
To avoid confusion, we use $q_\alpha(\cdot\mid\vx)$ to denote the proposal in \cref{eq:pncg-final-form} with step size $\alpha$, i.e.,
\begin{align}
    \label{eq:pncg-alpha}
    q_\alpha(\vx'\mid\vx)\propto \exp\left(
        -\frac{1}{2}\nabla U(\vx)^\top(\vx'-\vx)
        -\frac{1}{2\alpha}\|\vx'-\vx\|_p^p
    \right).
\end{align}

We first note that, for $\alpha>0$, the proposal $q_\alpha$ is dense in the sense that $q_\alpha(\vx'\mid\vx)>0$ for all $\vx,\vx'\in\calX$. This implies that the chain is irreducible and aperiodic, which guarantees that there must be a unique stationary distribution.

Let $\pi(\vx)\propto \exp\left( \vx^\top A \vx + \vb^\top \vx \right)$ be a discrete log-quadratic distribution. In this case, the energy function is $U(\vx)=-\vx^\top A \vx - \vb^\top \vx$.
Since $U(\vx)$ is a quadratic function, the second-order Taylor expansion is exact, which means
\begin{align}
    \label{eq:pncg-conv-proof-taylor}
    U(\vx')=U(\vx)+\nabla U(\vx)^\top (\vx'-\vx) + \frac{1}{2} (\vx'-\vx)^\top \nabla^2 U(\vx) (\vx'-\vx).
\end{align}
Rearranging \cref{eq:pncg-conv-proof-taylor}, we get
\begin{align}
    \nabla U(\vx)^\top (\vx'-\vx) &= U(\vx')-U(\vx) - \frac{1}{2} (\vx'-\vx)^\top \nabla^2 U(\vx) (\vx'-\vx) \\
    \intertext{which is equivalent to}
    \frac{1}{2}\nabla U(\vx)^\top (\vx'-\vx) &= \frac{1}{2}\left(U(\vx')-U(\vx)\right) - \frac{1}{4} (\vx'-\vx)^\top \nabla^2 U(\vx) (\vx'-\vx) \\
    &= \frac{1}{2}\left(U(\vx')-U(\vx)\right) + \frac{1}{2} (\vx'-\vx)^\top A (\vx'-\vx). \label{eq:pncg-conv-proof-1} &\justification{$\nabla^2 U(\vx)=-2A$}
\end{align}
Using \cref{eq:pncg-conv-proof-1}, we can rewrite the proposal \cref{eq:pncg-alpha} as
\begin{restatable}{donothing}{alphaProposalNormalized}
\begin{equation}
    \label{eq:pncg-alpha-proposal-normalized}
    q_\alpha(\vx'\mid\vx)=\frac{1}{Z_\alpha(\vx)} \exp\left(
        -\frac{1}{2}\left(U(\vx')-U(\vx)\right) 
        - \frac{1}{2} (\vx'-\vx)^\top A (\vx'-\vx)
        -\frac{1}{2\alpha}\|\vx'-\vx\|_p^p
    \right)
\end{equation}
\end{restatable}
where
\begin{restatable}{donothing}{alphaProposalNormalizationConst}
\begin{align}
    \label{eq:pncg-alpha-proposal-normalization-const}
    Z_\alpha(\vx)=\sum_{\vy\in\calX} \exp\left(
        -\frac{1}{2}\left(U(\vy)-U(\vx)\right) 
        - \frac{1}{2} (\vy-\vx)^\top A (\vy-\vx)
        -\frac{1}{2\alpha}\|\vy-\vx\|_p^p
    \right).
\end{align}
\end{restatable}
Now, we suppose $\pi_\alpha$ is a reversing distribution with respect to $q_\alpha$ and try to solve for it. 
First, by the definition of reversibility,
\begin{align}
    \label{eq:alpha-detailed-balance}
    \pi_\alpha(\vx)q_\alpha(\vx'\mid\vx) &= \pi_\alpha(\vx')q_\alpha(\vx\mid\vx')
\end{align}
which, after substituting in \cref{eq:pncg-alpha-proposal-normalized}, expands to
\begin{align}
    & \frac{\pi_\alpha(\vx)}{Z_\alpha(\vx)} \exp\left(
        -\frac{1}{2}\left(U(\vx')-U(\vx)\right) 
        - \Ccancel[ETHBlue]{\frac{1}{2} (\vx'-\vx)^\top A (\vx'-\vx)}
        - \Ccancel[ETHRed]{\frac{1}{2\alpha}\|\vx'-\vx\|_p^p}
    \right) \nonumber \\
    = &\frac{\pi_\alpha(\vx')}{Z_\alpha(\vx')} \exp\left(
        -\frac{1}{2}\left(U(\vx)-U(\vx')\right) 
        - \Ccancel[ETHBlue]{\frac{1}{2} (\vx-\vx')^\top A (\vx-\vx')}
        - \Ccancel[ETHRed]{\frac{1}{2\alpha}\|\vx-\vx'\|_p^p}
    \right)
\end{align}
and simplifies to
\begin{align}
    && \frac{\pi_\alpha(\vx)}{Z_\alpha(\vx)} \exp\left(
        -\frac{1}{2}\left(U(\vx')-U(\vx)\right)
    \right) &= \frac{\pi_\alpha(\vx')}{Z_\alpha(\vx')} \exp\left(
        -\frac{1}{2}\left(U(\vx)-U(\vx')\right)
    \right) \label{eq:alpha-detailed-balance-intermediate} \\
    \Leftrightarrow && \frac{\pi_\alpha(\vx) }{Z_\alpha(\vx)} \exp(U(\vx))  &= \frac{\pi_\alpha(\vx') }{Z_\alpha(\vx')} \exp(U(\vx')) \\
    \Leftrightarrow && \frac{\pi_\alpha(\vx) }{Z_\alpha(\vx)} \cdot  {\frac{Z}{\exp(-U(\vx))}}
                    &= \frac{\pi_\alpha(\vx') }{Z_\alpha(\vx')} \cdot {\frac{Z}{\exp(-U(\vx')) }}
                    &\justification{$\textstyle Z\defeq\sum\nolimits_{\vx\in\calX} \exp(-U(\vx))$} \\
    \Leftrightarrow && \frac{\pi_\alpha(\vx) }{Z_\alpha(\vx) \pi(\vx)}
                    &= \frac{\pi_\alpha(\vx') }{Z_\alpha(\vx') \pi(\vx')}.
                    &\justification{$\pi(\vx)=\exp(-U(\vx))/Z$}
                    \label{eq:pncg-conv-proof-equal-chain}
\end{align}
\cref{eq:pncg-conv-proof-equal-chain} shows that $\frac{\pi_\alpha(\vx) }{Z_\alpha(\vx) \pi(\vx)}=c_\alpha$ for some constant $c_\alpha$ for all $\vx\in\calX$.
Noting that $\sum_{\vx\in\calX} \pi_\alpha(\vx)=1$, we can solve for $c_\alpha$ to be
\begin{align}
    1=\sum_{\vx\in\calX} \pi_\alpha(\vx) = \sum_{\vx\in\calX} c_\alpha Z_\alpha(\vx) \pi(\vx) = c_\alpha \sum_{\vx\in\calX} Z_\alpha(\vx) \pi(\vx)
\end{align}
which yields
\begin{align}
    c_\alpha = \frac{1}{\sum_{\vx\in\calX} Z_\alpha(\vx) \pi(\vx)}
\end{align}
and hence the reversing measure $\pi_\alpha$ should be
\begin{align}
    \pi_\alpha(\vx)=\frac{Z_\alpha(\vx) \pi(\vx)}{\sum_{\vy\in\calX} Z_\alpha(\vy) \pi(\vy)}.
    \label{eq:pncg-alpha-reversing-dist}
\end{align}
One can quickly verify that $\pi_\alpha$ as defined in \cref{eq:pncg-alpha-reversing-dist} indeed satisfies the detailed balance equation in \cref{eq:alpha-detailed-balance} and hence is indeed a reversing measure for $q_\alpha$. 
We can now conclude that $q_\alpha$ produces a reversible chain and that $\pi_\alpha$ is its unique stationary (and simultaneously reversing) measure.\footnote{One may notice at \cref{eq:alpha-detailed-balance-intermediate} that setting $\pi_\alpha(\vx)\propto \exp(-U(\vx))/Z_\alpha(\vx)$ will symmetrize both sides of the equation, resulting in detailed balance. This observation can avoid the last bit of calculation.}

Finally, to show the weak convergence, we observe that
\begin{align}
    \label{eq:pncg-proof-dirac-delta}
    \lim_{\alpha\to 0} \exp\left(
        -\frac{1}{2}\left(U(\vy)-U(\vx)\right) 
        - \frac{1}{2} (\vy-\vx)^\top A (\vy-\vx)
        -\frac{1}{2\alpha}\|\vy-\vx\|_p^p
    \right) = \begin{cases}
        0 & \vy\not=\vx \\
        1 & \vy=\vx
    \end{cases}=\delta_{\vx}(\vy)
\end{align}
where $\delta_{\vx}(\cdot)$ is the Dirac delta centered at $\vx$. This means that
\begin{align}
    &\lim_{\alpha\to 0} Z_\alpha(\vx) \\
    =&\lim_{\alpha\to 0} \sum_{\vy\in\calX} \exp\left(
        -\frac{1}{2}\left(U(\vy)-U(\vx)\right) 
        - \frac{1}{2} (\vy-\vx)^\top A (\vy-\vx)
        -\frac{1}{2\alpha}\|\vy-\vx\|_p^p
    \right) \\
    =& \sum_{\vy\in\calX} \delta_{\vx}(\vy)  &\justification{by \cref{eq:pncg-proof-dirac-delta}} \\
    =& 1.
\end{align}
Hence
\begin{align}
    \lim_{\alpha\to 0} \pi_\alpha(\vx)=\lim_{\alpha\to 0} \frac{Z_\alpha(\vx) \pi(\vx)}{\sum_{\vy\in\calX} Z_\alpha(\vy) \pi(\vy)}
    =\frac{\pi(\vx)}{\sum_{\vy\in\calX} \pi(\vy)} = \pi(\vx)
\end{align}
which shows that $\pi_\alpha$ converges to $\pi$ pointwise. It is a well-known result that, in the case of discrete distributions, pointwise convergence implies weak convergence.\footnote{See, for example, Exercise 3.2.11 in \citet{durrett_2019}.} Hence, $\pi_\alpha\to\pi$ weakly as $\alpha\to0$.
\end{proof}

\section{Proof of \cref{thm:pncg-mixing-time}}
\label{sec:pncg-mixing-time}

We state the \gershgorin disc theorem here for reference.
\begin{theorem}[\gershgorin disc theorem; Theorem 6.1.1 in \citealp{horn2013}]
\label{thm:gershgorin}
Given a matrix $P$ and denote its non-diagonal sum as $R_i=\sum_{j\not=i} |P_{ij}|$. Define the \gershgorin discs as
\begin{align}
    D(a_{ii},R_i)=\{z\in\C: |z-a_{ii}|\leq R_i\}.
\end{align}
Then, all eigenvalues of $P$ are in the union of the \gershgorin discs.
\end{theorem}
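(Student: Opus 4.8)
The plan is to give the classical proof of the \gershgorin disc theorem: take an arbitrary eigenvalue, read the equation $P\bm{v}=\lambda\bm{v}$ componentwise, and exploit the \emph{dominant} coordinate of the eigenvector to force $\lambda$ into the corresponding disc. First I would fix an eigenvalue $\lambda$ of $P$ together with a nonzero eigenvector $\bm{v}$, so that $P\bm{v}=\lambda\bm{v}$. Since $\bm{v}\neq\bm{0}$, I would choose an index $i$ at which the coordinate magnitude is maximal, i.e.\ $|v_i|=\max_j |v_j|>0$; this choice is the heart of the argument.

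Next I would read off the $i$\textsuperscript{th} scalar equation of $P\bm{v}=\lambda\bm{v}$, namely $\sum_j P_{ij}v_j=\lambda v_i$, and isolate the diagonal term to obtain
\begin{align}
(\lambda-a_{ii})\,v_i=\sum_{j\neq i} P_{ij}\,v_j,
\end{align}
where $a_{ii}=P_{ii}$. Taking absolute values, applying the triangle inequality, and then using the bound $|v_j|\leq|v_i|$ guaranteed by maximality gives
\begin{align}
|\lambda-a_{ii}|\,|v_i|\leq \sum_{j\neq i}|P_{ij}|\,|v_j|\leq \Big(\sum_{j\neq i}|P_{ij}|\Big)|v_i|=R_i\,|v_i|.
\end{align}
Dividing through by $|v_i|>0$ yields $|\lambda-a_{ii}|\leq R_i$, i.e.\ $\lambda\in D(a_{ii},R_i)$. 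Since $\lambda$ was an arbitrary eigenvalue, every eigenvalue lies in the union of the discs, which is the claim.

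The only subtle point --- and the step I would flag as the conceptual ``main obstacle,'' though it is not technically hard --- is the selection of the maximal coordinate. Choosing $i$ to maximize $|v_i|$ is precisely what lets the triangle-inequality bound $\sum_{j\neq i}|P_{ij}|\,|v_j|$ be replaced by $R_i\,|v_i|$; for any other index the ratios $|v_j|/|v_i|$ need not be bounded by $1$ and the argument collapses. I would also note that $|v_i|>0$ holds exactly because $\bm{v}$ is a nonzero vector and $i$ attains the maximum magnitude, so the final division is legitimate. No appeal to diagonalizability, continuity, or any spectral machinery is needed; the result follows purely from the eigenvector equation and the triangle inequality.
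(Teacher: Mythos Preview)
Your proof is correct and is the standard argument for the \gershgorin disc theorem. Note, however, that the paper does not supply its own proof of this statement: it is quoted as a background result with a citation to \citet{horn2013}, and is then invoked as a tool inside the proof of \cref{thm:pncg-mixing-time}. So there is no ``paper's proof'' to compare against here; your write-up simply fills in what the paper leaves to the reference, and it does so by exactly the classical dominant-coordinate argument one finds in that reference.
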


\pncgMixing*
\begin{proof}
Let $\pi(\vx)\propto \exp\left( \vx^\top A \vx + \vb^\top \vx \right)$ be a discrete log-quadratic distribution. Here, we let the energy function be $U(\vx)=-\vx^\top A \vx - \vb^\top \vx + \text{const}$. 
We additionally assume, without loss of generality, that $U(\vx)\leq0$ for all $\vx\in\calX$, since we can subtract a constant from the energy function of each state without altering the distribution.

We recall from the proof of \cref{thm:pncg-convergence} that the proposal can be rewritten as
\alphaProposalNormalized*
where
\alphaProposalNormalizationConst*

To apply the \gershgorin disc theorem, we first need to bound the non-diagonal mass in the transition matrix.
The non-diagonal mass, i.e., the probability of non-self-transition, is
\begin{align}
    &\sum_{\vy\not=\vx} q_\alpha(\vy\mid\vx) \\
    =& \frac{
        \sum_{\vy\not=\vx} \exp\left(
            -\frac{1}{2}\left(U(\vy)-U(\vx)\right) 
            - \frac{1}{2} (\vy-\vx)^\top A (\vy-\vx)
            -\frac{1}{2\alpha}\|\vy-\vx\|_p^p
        \right)
    }{
        \sum_{\vy\in\calX} \exp\left(
            -\frac{1}{2}\left(U(\vy)-U(\vx)\right) 
            - \frac{1}{2} (\vy-\vx)^\top A (\vy-\vx)
            -\frac{1}{2\alpha}\|\vy-\vx\|_p^p
        \right)
    } \\
    =& \frac{
        \sum_{\vy\not=\vx} \exp\left(
            -\frac{1}{2}\left(U(\vy)-U(\vx)\right) 
            - \frac{1}{2} (\vy-\vx)^\top A (\vy-\vx)
            -\frac{1}{2\alpha}\|\vy-\vx\|_p^p
        \right)
    }{
        1 + \sum_{\vy\not=\vx} \exp\left(
            -\frac{1}{2}\left(U(\vy)-U(\vx)\right) 
            - \frac{1}{2} (\vy-\vx)^\top A (\vy-\vx)
            -\frac{1}{2\alpha}\|\vy-\vx\|_p^p
        \right)
    } \\
    \leq& \sum_{\vy\not=\vx} \exp\left(
        -\frac{1}{2}\left(U(\vy)-U(\vx)\right) 
        - \frac{1}{2} (\vy-\vx)^\top A (\vy-\vx)
        -\frac{1}{2\alpha}\|\vy-\vx\|_p^p
    \right) \label{eq:non-self-transition-bound-1}
\end{align}
Without loss of generality, we can assume that  $A$ is symmetric because we can substitute $A$ with its symmetric part $\frac{1}{2}(A^\top+A)$ without changing any quantity of interest.
Then we can apply the Rayleigh-Ritz inequality,%
which states that, for any $\vv\not=0$,
\begin{align}
    \label{eq:rayleigh-ritz}
    \frac{\vv^\top A \vv}{\vv^\top\vv} \geq \lambda_{\min}(A).
\end{align}
We further define the useful quantity for $q\geq 1$,
\begin{align}\label{eq:dq-defn}
    d_q\defeq\inf_{\vx\not=\vx'\in\calX} \|\vx-\vx'\|_q^q.
\end{align}
Continuing from \cref{eq:non-self-transition-bound-1},
\begin{align}
    & \sum_{\vy\not=\vx} \exp\left(
        -\frac{1}{2}\left(U(\vy)-U(\vx)\right) 
        - \frac{1}{2} (\vy-\vx)^\top A (\vy-\vx)
        -\frac{1}{2\alpha}\|\vy-\vx\|_p^p
    \right) \\
    \leq& \sum_{\vy\not=\vx} \exp\left(
        -\frac{1}{2}\left(U(\vy)-U(\vx)\right) 
        - \frac{1}{2} \lambda_{\min}(A) \|\vy-\vx\|_2^2
        -\frac{1}{2\alpha}\|\vy-\vx\|_p^p
    \right)  &\justification{Rayleigh-Ritz, \cref{eq:rayleigh-ritz}} \\
    \leq& \sum_{\vy\not=\vx} \exp\left(
        -\frac{1}{2}\left(U(\vy)-U(\vx)\right) 
        - \frac{1}{2} \lambda_{\min}(A) d_2
        -\frac{1}{2\alpha} d_p
    \right) &\justification{definition of $d_q$, \cref{eq:dq-defn}} \\
    =& \exp\left(
        - \frac{1}{2} \lambda_{\min}(A) d_2
        -\frac{1}{2\alpha} d_p
    \right) \sum_{\vy\not=\vx} \exp\left(
        -\frac{1}{2}U(\vy)+\frac{1}{2}U(\vx)
    \right) \\
    \leq& \exp\left(
        - \frac{1}{2} \lambda_{\min}(A) d_2
        -\frac{1}{2\alpha} d_p
    \right) \sum_{\vy\not=\vx} \exp\left(
        -\frac{1}{2}U(\vy)
    \right) &\justification{assumption that $U(\vx)\leq0$} \\
    \leq& \exp\left(
        - \frac{1}{2} \lambda_{\min}(A) d_2
        -\frac{1}{2\alpha} d_p
    \right) \sum_{\vy\not=\vx} \exp\left(
        -U(\vy)
    \right) &\justification{assumption that $U(\vy)\leq0$} \\
    \leq& \exp\left(
        - \frac{1}{2} \lambda_{\min}(A) d_2
        -\frac{1}{2\alpha} d_p
    \right) \sum_{\vy\in\calX} \exp\left(
        -U(\vy)
    \right) \\
    =& Z \exp\left(
        - \frac{1}{2} \lambda_{\min}(A) d_2
        -\frac{1}{2\alpha} d_p
    \right). &\justification{$\textstyle Z\defeq\sum\nolimits_{\vx\in\calX} \exp(-U(\vx))$} \label{eq:non-self-transition-bound-2}
\end{align}
Combining \cref{eq:non-self-transition-bound-1} and \cref{eq:non-self-transition-bound-2}, we obtain a bound for the non-self-transition probability
\begin{align}
    \label{eq:non-self-transition-bound}
    \sum_{\vy\not=\vx} q_\alpha(\vy\mid\vx) \leq Z \exp\left(
        - \frac{1}{2} \lambda_{\min}(A) d_2
        -\frac{1}{2\alpha} d_p
    \right).
\end{align}
We have established in \cref{thm:pncg-convergence} that the Markov chain defined by $q_\alpha$ is reversible.
It is a well-known fact that the transition matrix of a reversible Markov chain has only real eigenvalues, and hence, the \gershgorin disc theorem (\cref{thm:gershgorin}) in this specific case implies that an eigenvalue $\lambda$ of the transition matrix of $q_{\alpha}$ satisfies
\begin{equation}
    \left|\lambda-q_\alpha(\vx\mid\vx)\right| \leq \sum_{\vy\not=\vx} q_\alpha(\vy\mid\vx) \leq Z \exp\left(
        - \frac{1}{2} \lambda_{\min}(A) d_2
        -\frac{1}{2\alpha} d_p
    \right)
\end{equation}
for at least one of $\vx\in\calX$. In particular, $\lambda_2$, the 2\textsuperscript{nd} largest eigenvalue of the transition matrix of $q_\alpha$, satisfies, for at least one $\vx\in\calX$,
\begin{equation}
    \label{eq:lambda-2-range}
    \left|\lambda_2-q_\alpha(\vx\mid\vx)\right| \leq Z \exp\left(
        - \frac{1}{2} \lambda_{\min}(A) d_2
        -\frac{1}{2\alpha} d_p
    \right).
\end{equation}
In a reversible Markov chain, the \textit{spectral gap} is defined as $\gamma=1-\lambda_2$ \citep[\S12.2]{levin2017}.
Using \cref{eq:non-self-transition-bound} and \cref{eq:lambda-2-range}, we can bound the spectral gap with
\begin{align}
    1-\lambda_2 &= |1-\lambda_2|  &\justification{$1=\lambda_1\geq\lambda_2$ in a reversible transition matrix} \\
    &\leq |1-q_\alpha(\vx\mid\vx)| + |q_\alpha(\vx\mid\vx)-\lambda_2| &\justification{triangle ineq.} \\
    &\leq |1-q_\alpha(\vx\mid\vx)| + Z \exp\left(
        - \frac{1}{2} \lambda_{\min}(A) d_2
        -\frac{1}{2\alpha} d_p
    \right) &\justification{\cref{eq:lambda-2-range}} \\
    &= \sum_{\vy\not=\vx} q_\alpha(\vy\mid\vx)  + Z \exp\left(
        - \frac{1}{2} \lambda_{\min}(A) d_2
        -\frac{1}{2\alpha} d_p
    \right) &\justification{$q_\alpha(\cdot\mid\vx)$ is a distribution} \\
    &\leq 2 \cdot Z \exp\left(
        - \frac{1}{2} \lambda_{\min}(A) d_2
        -\frac{1}{2\alpha} d_p
    \right). &\justification{\cref{eq:non-self-transition-bound}}
\end{align}

Finally, the mixing time and the spectral gap are closely related by the following well-known relationship.
\begin{theorem}[Theorem 12.4 and 12.5 in \citealp{levin2017}]
    \label{thm:mixing-spectral}
    In a reversible, irreducible Markov chain, the spectral gap $\gamma$ and the mixing time $t_{\mix}(\varepsilon)$ are related by
    \begin{equation}
    \left( \frac{1}{\gamma} - 1 \right) \log\left(\frac{1}{2\varepsilon}\right)
    \leq t_\mix(\varepsilon)
    \leq \frac{1}{\gamma} \log\left(\frac{1}{\varepsilon \pi_{\min}}\right)
\end{equation}
where $\pi_{\min}=\min_{x\in\calX} \pi(x)$.
\end{theorem}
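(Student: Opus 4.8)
The plan is to establish both inequalities through the spectral decomposition of the transition matrix, which is the standard route for reversible chains (this is precisely the content of Theorems 12.4 and 12.5 of \citet{levin2017}, so I only sketch the argument). Reversibility with respect to $\pi$ means the transition matrix $P$ is self-adjoint on $L^2(\pi)$ under the inner product $\langle f,g\rangle_\pi=\sum_{\vx}\pi(\vx)f(\vx)g(\vx)$. Hence $P$ admits a real orthonormal eigenbasis $f_1,\dots,f_{|\calX|}$ with real eigenvalues $1=\lambda_1\geq\lambda_2\geq\cdots\geq\lambda_{|\calX|}\geq-1$, where $f_1\equiv 1$ and $\gamma=1-\lambda_2$. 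Throughout I would use the spectral representation $\P_{\vx}^{t}(\vy)/\pi(\vy)=\sum_{j}\lambda_j^t f_j(\vx)f_j(\vy)$, together with the reproducing identity $\sum_{j}f_j(\vx)^2=1/\pi(\vx)$ obtained by setting $t=0$ and $\vy=\vx$.

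For the lower bound I would exploit the single eigenfunction $f_2$ attached to $\lambda_2$. Since $\langle f_2,f_1\rangle_\pi=0$ we have $\mathbb{E}_\pi[f_2]=0$, while $P^t f_2=\lambda_2^t f_2$ gives $\mathbb{E}_{\P_{\vx}^{t}}[f_2]=\lambda_2^t f_2(\vx)$. Plugging $f_2$ into the test-function characterization of total variation, $|\mathbb{E}_\mu[f]-\mathbb{E}_\nu[f]|\leq 2\|f\|_\infty\, d_\tv(\mu,\nu)$, and choosing $\vx$ to maximize $|f_2|$ cancels $\|f_2\|_\infty$ on both sides, leaving $\lambda_2^{t}\leq 2\,d_\tv(\P_{\vx}^{t},\pi)$. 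Evaluating at $t=t_\mix(\varepsilon)$ yields $\lambda_2^{t_\mix(\varepsilon)}\leq2\varepsilon$; taking logarithms (dividing by the negative quantity $\log\lambda_2$) and using the elementary inequality $-\log(1-\gamma)\leq\gamma/(1-\gamma)$ to upper-bound the denominator $-\log\lambda_2=-\log(1-\gamma)$ rearranges exactly to $t_\mix(\varepsilon)\geq(1/\gamma-1)\log\left(\frac{1}{2\varepsilon}\right)$.

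For the upper bound I would control the $\chi^2$-divergence rather than a single mode. Subtracting the $j=1$ term from the spectral representation gives $\P_{\vx}^{t}(\vy)/\pi(\vy)-1=\sum_{j\geq2}\lambda_j^t f_j(\vx)f_j(\vy)$, and Parseval's identity in $L^2(\pi)$ collapses the divergence to $\chi^2(\P_{\vx}^{t}\,\|\,\pi)=\sum_{j\geq2}\lambda_j^{2t}f_j(\vx)^2$. Bounding each $\lambda_j^{2t}$ by $(1-\gamma)^{2t}$ and applying the reproducing identity gives $\chi^2(\P_{\vx}^{t}\,\|\,\pi)\leq(1-\gamma)^{2t}/\pi(\vx)$. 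Combining the standard comparison $2\,d_\tv\leq\sqrt{\chi^2}$ with $(1-\gamma)^t\leq e^{-\gamma t}$ and solving for the smallest $t$ that drives the right-hand side below $\varepsilon$ uniformly over $\vx$ (replacing $\pi(\vx)$ by $\pi_{\min}$) produces $t_\mix(\varepsilon)\leq\frac{1}{\gamma}\log\left(\frac{1}{\varepsilon\pi_{\min}}\right)$.

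The only place where genuine care is needed is that both bounds properly involve the \emph{absolute} spectral gap $1-\lambda_*$ with $\lambda_*=\max\{|\lambda_2|,|\lambda_{|\calX|}|\}$, whereas the statement is phrased with $\gamma=1-\lambda_2$: for the lower bound one should choose the nontrivial eigenfunction whose eigenvalue has largest modulus, and in the upper bound $(1-\gamma)^{2t}$ should strictly read $\lambda_*^{2t}$. These coincide with $\gamma$ whenever $\lambda_2\geq|\lambda_{|\calX|}|$, which is automatic for the near-lazy proposals $q_\alpha$ studied here, since for small $\alpha$ the self-transition mass $q_\alpha(\vx\mid\vx)$ is close to $1$ and forces every eigenvalue to be positive. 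Hence invoking \cref{thm:mixing-spectral} in the form stated is justified in our setting.
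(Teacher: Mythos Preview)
The paper does not prove this theorem; it is simply quoted from \citet{levin2017} and used as a black box in the proof of \cref{thm:pncg-mixing-time}. Your sketch is the standard spectral argument and is correct, including the caveat that the bounds in \citet{levin2017} are actually stated in terms of the absolute spectral gap $\gamma_*=1-\lambda_*$ rather than $\gamma=1-\lambda_2$; your remark that the two coincide for the near-lazy proposals $q_\alpha$ at small $\alpha$ is exactly the justification needed for the application in \cref{sec:pncg-mixing-time}.
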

Using the left inequality in \cref{thm:mixing-spectral}, we can conclude that
\begin{align}
    t_{\mix}(\varepsilon) &\geq \left( \frac{1}{\gamma} - 1 \right) \log\left(\frac{1}{2\varepsilon}\right) \\
    &=  \left( \frac{1}{1-\lambda_2} - 1 \right) \log\left(\frac{1}{2\varepsilon}\right) \\
    &\geq \left[\frac{1}{2\cdot Z} \exp\left(
        \frac{1}{2} \lambda_{\min}(A) d_2 + \frac{1}{2\alpha} d_p
    \right)-1\right] \log\left(\frac{1}{2\varepsilon}\right) \\
    &= \left[\frac{\exp(\lambda_{\min}(A) d_2/2)}{2\cdot Z} \exp\left(
        \frac{d_p}{2\alpha}
    \right)-1\right] \log\left(\frac{1}{2\varepsilon}\right)
\end{align}
Setting $c_1=\frac{1}{2}\exp(\lambda_{\min}(A) d_2/2)>0$ and $c_2=d_p>0$, we obtain the desired bound
\begin{align}
    t_{\mix}(\varepsilon) \geq \left(\frac{c_1}{Z}\exp\left(\frac{c_2}{2\alpha}\right) - 1 \right) \log\left(\frac{1}{2\varepsilon}\right).
\end{align}
\end{proof}

\section{Experimental Setup}
\label{sec:experimental-setup}

\paragraph{Hyperparameters.}
We found that in GwL, random scan in general performs better than systematic scan. Therefore, all results reported for GwL uses random scan.

\begin{itemize}
    \item \textbf{Toy Example.} In the toy example, the inverse temperature $\beta=0.42$ and the sequence length (i.e., the number of spins in the Ising model) is 5. The underlying Ising topology is a linear chain with the ends connected. All step sizes are tuned with grid search. The step size for \mucola is 1.5, the trajectory length of \svs is $2\pi$, and the step size of \pncg and GwL are both 1.0
    \item \textbf{Sampling from Language Models.} The step size for \mucola is 0.15, and the step size of both \pncg and GwL is 4.0
    \item \textbf{Controlled Generation.} The step size used for \mucola is 1.0 with the energy weight $\beta=2.0$. For \svs and \svs-\langevin, the energy weight is $\beta=1.5$ and the step size is 1.5. Finally, for \pncg and GwL, the step size is $\alpha=1.0$ and the energy is $\beta=1.25$.
\end{itemize}

\paragraph{Classifiers.} We train two classifiers independently, called an internal classifier and an external classifier. The internal classifier is used as the energy function during generation, and the external classifier is used to determine whether the generated text follows the control objective correctly.

The internal classifier is a probing classifier on top of frozen GPT-2 embeddings. The probing classifier is a 3-layered \textsc{BiLSTM} model with $0.5$ dropout. The classifier achieves a $0.84$ F1 score on the test set. We then train an evaluator classifier to evaluate the success rates of the controlled generation algorithms. 

The external classifier is a fine-tuned \textsc{RoBERTa} model that achieves $0.90$ f1-score on the test set.  

\section{Controlled Generation Samples}
\label{sec:control-sample}

We present controlled generation text samples in \cref{tab:control-sample}.

\begin{table*}[t] 
\centering 
\resizebox{\linewidth}{!}{%
\begin{tabular}{@{}ll@{}}\toprule
 \multicolumn{2}{c}{\textbf{Chinese}} \\
 \fudge & In the city centre near Yippee Noodle Bar Chinese, is Alimentum. It has moderate prices and \\ 
 \mucola & and has a 1 out of 5. It has food and high customer rating. The Rice Boat is \\
 \svs-\langevin & It serves Chinese food with a low customer rating. The fast food and restaurant The Golden Curry is a \\
 \svs & It has a low customer rating and a price. The highly rated Chinese restaurant The Phoenix has a high \\
 \pncg + GwL & The Golden Curry is a Chinese food restaurant with a 5 out 5 rating and is not family-friendly \\
 \midrule
 \multicolumn{2}{c}{\textbf{English}} \\
 \fudge & It has an average customer Rating. Bibimbap House has English food in the riverside area near  \\
 \mucola & and has a low customer rating. The Golden Curry is a children friendly, serving English food, with \\
 \svs-\langevin & It has low rating and is located near the to the city centre. The Phoenix is a English food \\
 \svs & Alimentum in the city centre near the a moderate price range. It serves English food, is \\
 \pncg + GwL & Midsummer House serves English food with a moderate price range and a high customer rating. It is \\
 \midrule
 \multicolumn{2}{c}{\textbf{Fast food}} \\
 \fudge & A fast food, coffee shop, Strada has a low customer rating, has a price range of over £30. It is \\
 \mucola & and is family friendly and serves fast food. The Wrestlers is a fast food coffee shop in the \\
 \svs-\langevin & It is located near the riverside, is a cheap family friendly fast food restaurant, and is called \\
 \svs & It is located near the river. The Mill is a cheap, fast food and coffee shop near the \\ 
 \pncg + GwL & Alimentum is a high-priced, child friendly, average rated fast food restaurant that is in \\
 \midrule
 \multicolumn{2}{c}{\textbf{French}} \\
 \fudge & It has a low-priced Inn French food. It is near Café Rouge.The Alimentum is a kid friendly fast food \\
 \mucola & The French restaurant The Waterman is located in the city centre. The price range is less than \\
 \svs-\langevin & It is a restaurant located in the riverside, the restaurant, offers French food with a price \\
 \svs & It is a family restaurant that serves French food with a price range and has a low customer rating. \\ 
 \pncg + GwL & The Waterman, located in city centre, has average French food, is inexpensive and is not family \\
 \midrule
 \multicolumn{2}{c}{\textbf{Indian}} \\
 \fudge & The Phoenix Indian restaurant has moderate prices with a 3 out of 5 rating. Located on the \\
 \mucola & It is in the city and has a low customer rating. The Waterman is a low priced \\
 \svs-\langevin & It is not child friendly and it is near the river. It serves Indian food and a customer rating \\
 \svs & It is located in the city centre near The Portland Arms Indian food and has a low customer rating. \\ 
 \pncg + GwL & The Phoenix is in the city centre that provides Indian food in the cheap price range. Its customer rating \\
 \midrule
 \multicolumn{2}{c}{\textbf{Italian}} \\
 \fudge & It has family Italian food and has a low  a moderate price range. The Rice Boat has an average \\
 \mucola & is a high priced Italian food restaurant with a customer rating of average. The Phoenix is a high \\
 \svs-\langevin & It is located in the city centre, it is not family friendly and is a coffee shop serving Italian \\
 \svs & It is located in the the city centre near The Portland Arms.The Eagle is an Italian restaurant. \\ 
 \pncg + GwL & The Eagle Italian food coffee shop, is a family friendly riverside restaurant with a low customer rating. \\
 \midrule
 \multicolumn{2}{c}{\textbf{Japanese}} \\
 \fudge & Japanese food. Its customer rating is 3 out of 5.The Phoenix is Japanese in the city centre \\
 \mucola & for Japanese food is located in the city centre. It has a low customer rating. The Golden \\
 \svs-\langevin & It is located in the riverside. It is a Japanese food. It is a pub restaurant \\
 \svs & It is located in the riverside. It is a low rated Japanese restaurant, and coffee shop.\\
 \pncg + GwL & It also serves Japanese food. It is located in the city centre and has a high price range. \\
\bottomrule
\end{tabular}}
\caption{Examples of sampled sentences from different control food targets.}
\label{tab:control-sample}
\end{table*}

\end{document}